%% file: acl_latex.tex
\documentclass[11pt]{article}

\usepackage[final]{acl}

\usepackage{times}
\usepackage{latexsym}
\usepackage[T1]{fontenc}
\usepackage[utf8]{inputenc}
\usepackage{microtype}
\IfFileExists{inconsolata.sty}{\usepackage{inconsolata}}{}
\usepackage{graphicx}
\usepackage{enumitem}

\usepackage{amsmath}
\usepackage{amssymb}
\usepackage{amsthm}

\usepackage{booktabs}
\usepackage{multirow}
\usepackage{stfloats} 
\usepackage{float}    

\IfFileExists{algorithm.sty}{\usepackage{algorithm}}{%
  \newcommand{\algofallbackcaption}[1]{\par\noindent\textbf{Algorithm:} ##1\par\medskip}%
  \newenvironment{algorithm}[1][t]{%
    \par\medskip\noindent
    \let\caption\algofallbackcaption
  }{\par\medskip}%
}
\IfFileExists{algpseudocode.sty}{\usepackage{algpseudocode}}{%
  \newenvironment{algorithmic}[1][0]{\begin{quote}\ttfamily\small\obeylines}{\end{quote}}%
  \providecommand{\State}{}%
  \providecommand{\For}[1]{\textbf{for} ##1 \textbf{do}}%
  \providecommand{\EndFor}{\textbf{end for}}%
}

\usepackage{tikz}
\usetikzlibrary{positioning, calc}

\usepackage{xcolor}
\usepackage{tcolorbox}
\tcbuselibrary{breakable}

\newtheorem{proposition}{Proposition}

\title{Information-Gain Rewards over Diversity-Pruned Tests:\\GT-Anchored Verifier Co-Training for Reliable Code Generation}

\author{Ana Nunez \\
  Secure AI and Autonomy Lab \\
  University of Texas at San Antonio \\
  \texttt{ana.nunez@utsa.edu} \\\And
  Peyman Najafirad \\
  Secure AI and Autonomy Lab \\
  University of Texas at San Antonio \\
  \texttt{peyman.najafirad@utsa.edu} \\}

\begin{document}
\maketitle

\input{sections/00_abstract}

\input{sections/01_introduction}

\input{sections/02_related_work}

\input{sections/03_method}

\input{sections/04_experiments}

\input{sections/05_discussions}

\input{sections/06_limitations}

\bibliography{custom}

\appendix

\input{sections/appendix_b_additional_results}

\input{sections/appendix_c_experiment_details}

\input{sections/appendix_additional_results}

\input{sections/appendix_a_proofs}

\end{document}

%% file: sections/00_abstract.tex
\begin{abstract}
Self-play methods that co-train a single language model as both coder and test author promise to move code-generation RL beyond fixed test suites, but they suffer from two coupled pathologies: \emph{permissiveness collapse}, where pass-rate rewards are maximised by trivial, non-discriminative tests, and \emph{concentration bias}, where i.i.d.\@ sampled tests cluster on modal inputs and inflate estimator variance.
We introduce \textbf{CoVer} (\emph{Co-trained Coder and Verifier}), a single-policy GRPO framework that addresses both failure modes.
First, an \emph{information-gain (IG) reward} scores each self-generated test by the mutual information between its pass/fail vector and a graded, ground-truth-anchored correctness signal $y\!\in\![0,1]^m$, gated by the sign of their covariance so that only positively discriminative tests receive reward.
Second, a \emph{{three-stage diversity-aware selection}} step prunes a candidate pool to a behaviourally non-redundant suite (invalidity, input-string, execution-profile filtering), raising the effective sample size of the IG estimator at fixed execution budget.
On five benchmarks (LiveBench, MBPP, LiveCodeBench, CodeContests, CodeForces), CoVer raises one-shot pass@1 by \textbf{+5.8} points at 7B and \textbf{+7.1} points at 14B over the Qwen2.5-Instruct backbone, and achieves the highest macro-average among all compared methods at both scales. As a drop-in backbone inside the CodeT ranking pipeline, CoVer-7B adds \textbf{+3.5} points, demonstrating the dual benefit of co-training for both generation and selection.
\end{abstract}

%% file: sections/01_introduction.tex
\input{tables/table2_gpt_application}

\section{Introduction}
\label{sec:introduction}

\begin{figure}[t]
  \centering
  \includegraphics[width=\columnwidth]{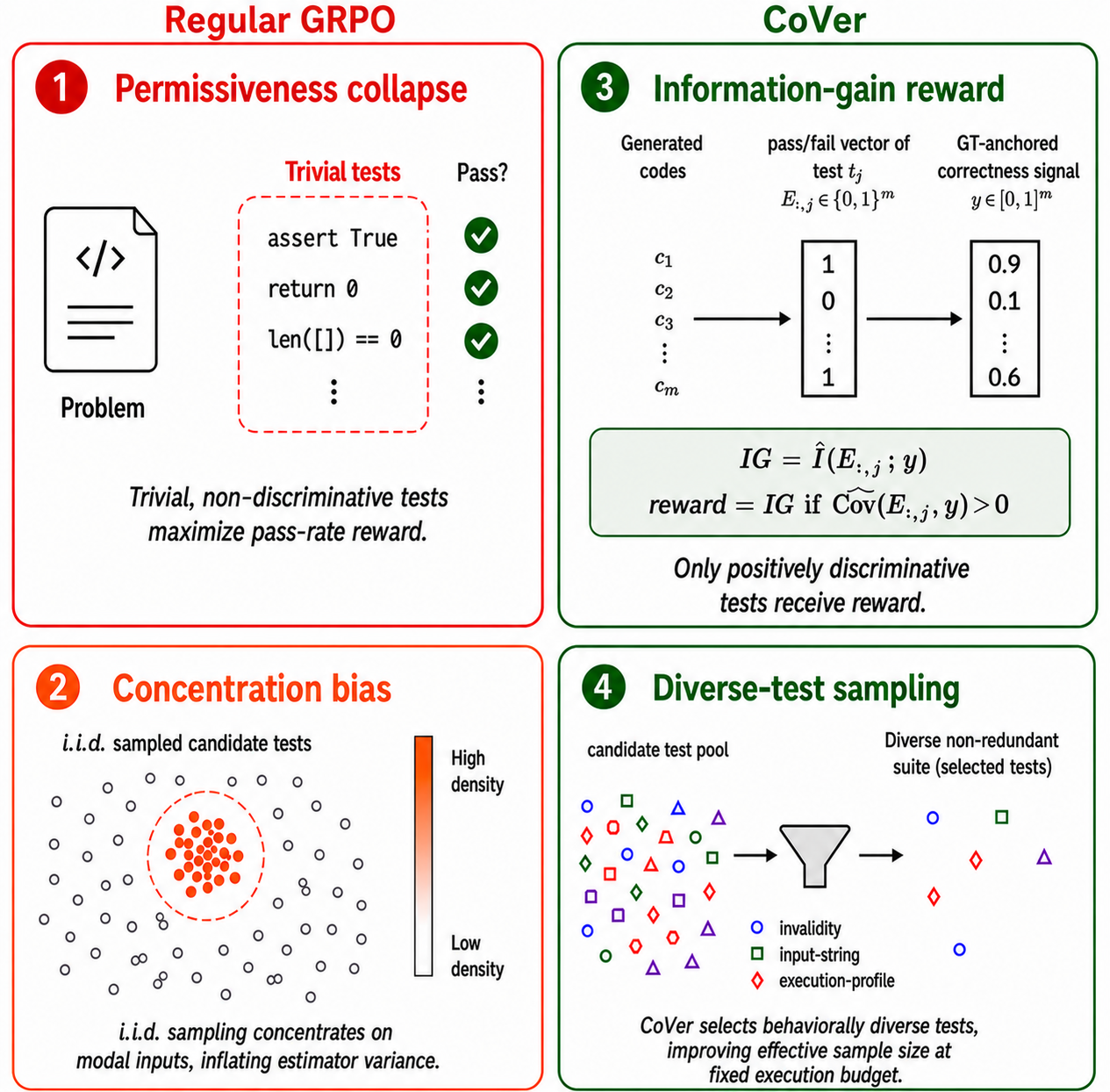}
  \caption{Two failure modes of standard self-play code RL (\textbf{left}) and CoVer's solutions (\textbf{right}). \textbf{(1)}~\emph{Permissiveness collapse}: trivial, non-discriminative tests maximise pass-rate and other binary-acceptance rewards while carrying zero information. \textbf{(2)}~\emph{Concentration bias}: i.i.d.\@ sampling clusters tests on modal inputs, inflating estimator variance. \textbf{(3)}~CoVer's IG reward scores each test by $I(E_{:,j};\,y)$ if $\mathrm{Cov}(E_{:,j}, y) > 0$, rewarding only positively discriminative tests. \textbf{(4)}~{Three-stage diversity-aware selection} (invalidity, input-string, execution-profile) selects a behaviourally diverse suite at fixed execution budget.}
  \label{fig:headline}
\end{figure}

Large language models (LLMs) have become the dominant tool for program synthesis, supporting tasks from competitive programming \citep{li2022competition,penedo2025codeforces} to functional code completion \citep{austin-mbpp}. The standard post-training recipe is Reinforcement Learning with Verifiable Rewards (RLVR): a policy is updated against an executable correctness signal, typically from a fixed unit-test suite. Group Relative Policy Optimization (GRPO) \citep{shao-deepseekmath}, together with PPO-based and critic-free RL-for-code methods \citep{dou-stepcoder,gehring-rlef,li-acecoder}, has made this loop a reliable recipe for improving one-shot \texttt{pass@1}. To move beyond fixed test sets, recent self-play work has the model author its own tests inside the RL loop and trains a single policy as both \emph{coder} and \emph{verifier} \citep{wang-cure,lin2025learning,wilf2025propose}, shifting correctness specification onto the model itself.

However, two coupled failure modes undermine existing self-play approaches. \textbf{Permissiveness collapse:} the verifier reward is a function of \emph{which} candidates a test accepts rather than \emph{how informative} the test is about correctness. A pass-rate reward, for instance, is maximised by a trivial constant-pass test that contributes zero discriminative signal; more broadly, rewards defined over binary acceptance patterns carry little information about a candidate's graded correctness, and the reward-maximisers are easy to generate, so the verifier drifts toward them. \textbf{Concentration bias:} tests drawn i.i.d.\@ from a single policy concentrate on a narrow region of modal inputs, producing highly correlated pass/fail columns in the execution table. Any reward computed from such a table inherits inflated variance and a biased view of program behaviour.

We propose \textsc{CoVer} (\emph{Co-trained Coder and Verifier}), a single-policy GRPO framework that addresses both pathologies through two coupled mechanisms (Figure~\ref{fig:pipeline}):
\begin{enumerate}[leftmargin=*,itemsep=2pt,topsep=2pt]
\item \textbf{Information-Gain (IG) Reward.} Each self-generated test is scored by the mutual information between its pass/fail column across $m$ candidate solutions and a \emph{graded} GT-anchored correctness signal $y\!\in\![0,1]^m$, gated by the sign of their covariance so that only positively discriminative tests receive reward. The graded form is essential: with a binary $y$, early in training almost no code passes every GT test, $H(y)\!\approx\!0$, and the IG reward vanishes. A graded $y$ preserves $H(y)$ from the first GRPO step and rewards tests that stratify codes by partial correctness.
\item \textbf{{Three-Stage Diversity-Aware Selection}.} We sample $K$ candidate tests per task and prune to a behaviourally non-redundant $k$-test suite through three discrete filters: (i)~removal of invalid tests, (ii)~input-string deduplication, and (iii)~execution-profile deduplication. The kept suite is a lower-variance estimator of the IG functional than direct $k$-sampling at matched execution budget, raising the effective sample size toward $k$.
\end{enumerate}

Both rewards are jointly optimized in a single-policy GRPO update, so gradients flow back to the same parameters under both roles. On five benchmarks \citep{white-livebench,austin-mbpp,jain-livecodebench,li2022competition,penedo2025codeforces}, \textsc{CoVer} achieves a macro-average one-shot \texttt{pass@1} of $35.94\%$ at the 7B scale and $42.90\%$ at 14B on the Qwen2.5-Instruct backbone \citep{yang2024qwen}, improving over the Instruct baseline and code-pretrained variants on every benchmark, and achieving the best macro-average among all compared methods at both scales.

\medskip
\noindent\textbf{Contributions.}
\begin{enumerate}[leftmargin=*,itemsep=1pt,topsep=2pt]
\item We propose \textsc{CoVer}, a single-policy GRPO framework that combines a covariance-gated MI verifier reward with {diversity-aware selection}, addressing permissiveness collapse and concentration bias in self-play code RL.
\item We frame self-generated test selection as a variance-reduction problem and prove that {diversity-aware selection} strictly reduces the variance of the IG estimator at matched execution budget.
\item We conduct experiments across five benchmarks at two scales, demonstrating consistent gains and isolating each design choice through targeted ablations.
\end{enumerate}

%% file: tables/table2_gpt_application.tex
\begin{table*}[t]
  \centering
  \small
  \setlength{\tabcolsep}{4pt}
  \resizebox{\textwidth}{!}{%
  \begin{tabular}{l ccccc}
    \toprule
    \multirow{2}{*}{\textbf{Method family}} &
      \textbf{GT only at} & \textbf{Trains a} & \textbf{MI-based} & \textbf{Diversity} & \textbf{No GT at} \\
      & \textbf{training} & \textbf{verifier} & \textbf{reward} & \textbf{selection} & \textbf{evaluation} \\
    \midrule
    Curated-test RL \citep{gehring-rlef,dou-stepcoder,li-acecoder}                   & \checkmark & --         & --         & --         & \checkmark \\
    Inference-time filtering \citep{chen-codet,ridnik-alphacodium}       & --         & --         & --         & --         & \checkmark \\
    Self-play co-training \citep{lin2025learning,wang-cure}              & --/\checkmark & \checkmark & --         & --         & \checkmark \\
    \textbf{CoVer (ours)}                                                 & \checkmark & \checkmark & \checkmark & \checkmark & \checkmark \\
    \bottomrule
  \end{tabular}%
  }
  \caption{Positioning CoVer against four threads of prior work. CoVer channels GT through a graded MI reward, prunes self-generated tests via diversity-aware selection for variance reduction, and evaluates in zero-shot mode with no GT signal.}
  \label{tab:gpt-application}
\end{table*}

%% file: sections/02_related_work.tex
\begin{figure*}[!t]
    \centering
    \includegraphics[width=\textwidth]{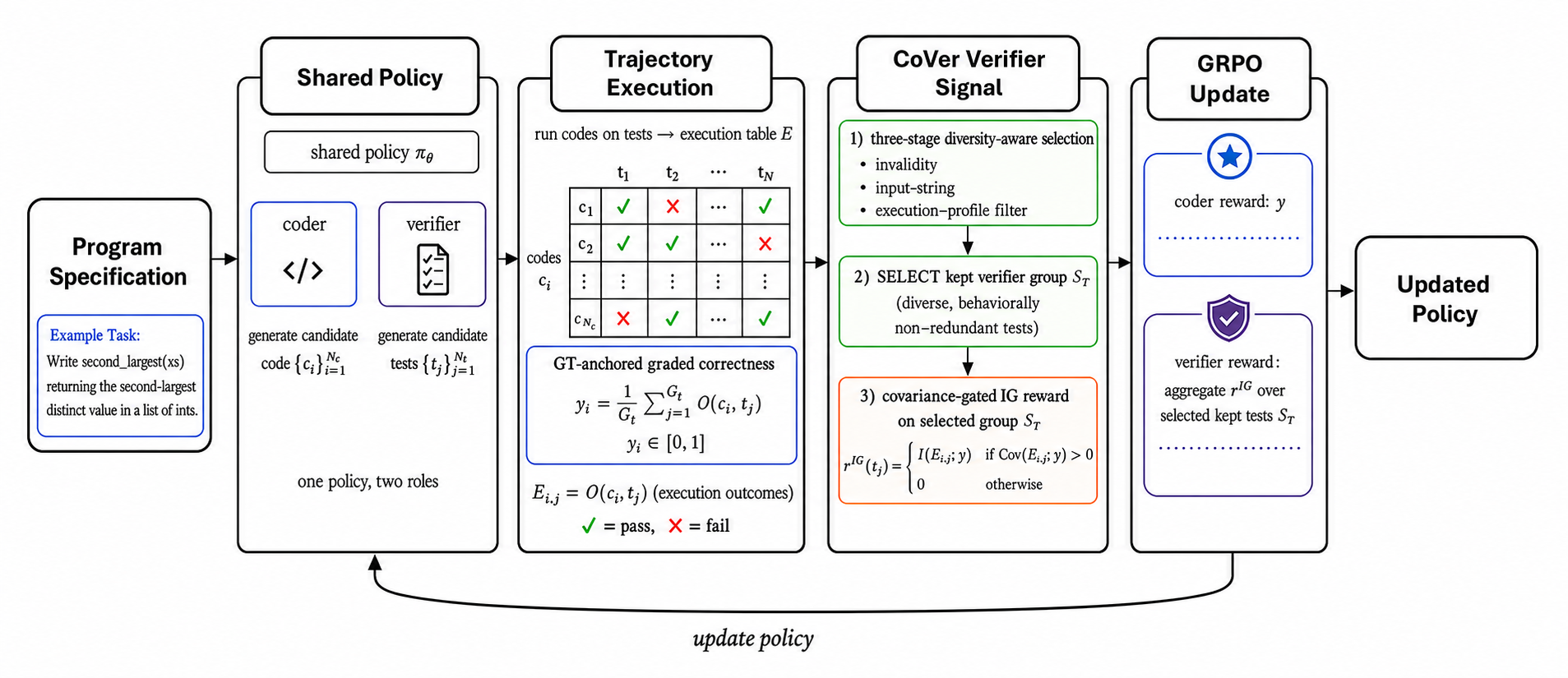}
    \caption{\textbf{Overview of the \textsc{CoVer} pipeline.} Given a \emph{program specification}, a single shared policy $\pi_\theta$ acts in two roles, \emph{coder} and \emph{verifier}, sampling a pool of candidate solutions $\{c_i\}$ and candidate tests $\{t_j\}$. During \emph{trajectory execution}, every solution is run against every test to populate an execution table $E$; in parallel, each solution is run against the ground-truth tests to compute a graded correctness signal $y$. The \emph{\textsc{CoVer} verifier signal} then (1) applies a three-stage diversity-aware selection to prune the pool to a behaviourally non-redundant suite, (2) keeps a verifier group $S_T$ of distinct tests, and (3) assigns each kept test a covariance-gated information-gain reward, so that only positively discriminative tests are rewarded. Finally, in the \emph{GRPO update}, the coder reward and the verifier IG reward over $S_T$ update the shared policy, which feeds back into the next round.}
\label{fig:pipeline}
\end{figure*}

\section{Related Work}
\label{sec:related}

\subsection{RL with Execution Feedback for Code}
\label{ssec:rel-curated}

PPO-based and related actor–critic methods \citep{schulman-ppo,le-coderl,shojaee-ppocoder} and GRPO \citep{shao-deepseekmath} reward the policy with pass/fail signals from dataset-shipped tests. StepCoder \citep{dou-stepcoder} decomposes rewards by execution sub-step; RLEF \citep{gehring-rlef} grounds the policy in multi-turn compiler and test feedback. ACECoder \citep{li-acecoder} synthesises tests offline and trains a separate Bradley--Terry reward model from pass-rate preferences. All these methods either use binary pass-aggregation as the per-step reward or train a separate reward model. CoVer uses the \emph{same} GT tests but channels them through a graded correctness signal that anchors a mutual-information verifier reward inside a single co-training loop, requiring no separate reward model.

\subsection{Inference-Time Test Generation and Filtering}
\label{ssec:rel-inference}

A complementary line generates tests at decoding time to filter, rerank, or cluster candidate solutions \citep{chen-codet,li2022competition,huang-mpsc,ridnik-alphacodium,li-sstar}. These methods are inference-only: the verification signal does not propagate into the policy. CoVer moves test-generation supervision into training so that a single backbone internalises both code and test distributions, and the resulting verifier transfers directly to inference-time scaling pipelines (\S\ref{sssec:inference-scaling}).

\subsection{Self-Generated Tests as RL Supervision}
\label{ssec:rel-selfplay}

CURE \citep{wang-cure} co-evolves coder and tester roles, rewarding each test with a precision estimator under a Bernoulli correctness model. Sol-Ver \citep{lin2025learning} shares our single-LLM two-role setup and sub-samples tests by branch coverage; supervision uses SFT followed by DPO rather than RL rewards. PSV \citep{wilf2025propose} relies on a formal verifier as oracle, a distinct supervision assumption. CoVer differs along two axes: (i)~the verifier reward is the signed MI between a test's pass/fail vector and a \emph{graded} GT-anchored signal $y$; and (ii)~tests are pruned by {three-stage diversity-aware selection} over execution profiles rather than by coverage on a single solution. Concurrent self-rewarding alignment \citep{yuan-selfrewarding,wu-metarewarding} and self-play critic work \citep{chen-spc} target non-code domains and are not directly comparable.

\subsection{Information-Theoretic Rewards}
\label{ssec:rel-mi}

Information-theoretic functionals have served as intrinsic rewards for exploration, variational information maximisation \citep{houthooft-vime}, and information-theoretic intrinsic motivation \citep{aubret-it-survey}. These rewards drive \emph{exploration}; CoVer repurposes the information-gain functional as the \emph{verifier} reward inside a single-policy GRPO loop, anchored by an external GT-derived correctness signal. 

Table~\ref{tab:gpt-application} positions CoVer against these four threads. Unlike prior self-play code RL methods that reward the verifier with pass-rate signals or discrimination scores and select tests by coverage heuristics, CoVer is, to our knowledge, the first to combine a covariance-gated MI verifier reward with execution-profile deduplication inside a single-policy GRPO loop, framing test selection as a variance-reduction problem.

%% file: sections/03_method.tex

\section{Method}
\label{sec:method}

\subsection{Overview}
\label{ssec:overview}

CoVer couples two mechanisms inside a single-policy GRPO loop. An \emph{information-gain verifier reward} (\S\ref{ssec:igreward}) scores each self-generated test by the mutual information between its pass/fail column and a graded GT-anchored correctness signal $y$, gated by the sign of their covariance. A \emph{\textcolor{black}{diversity-aware selection} step} (\S\ref{ssec:diversity}) prunes a candidate test pool to a behaviourally non-redundant suite, raising the effective sample size of the reward estimator at fixed execution cost. The coder is updated against $y$, the verifier against the IG reward, and both roles share a single policy (\S\ref{ssec:grpo}). Algorithm~\ref{alg:cover} summarises the procedure.

\begin{algorithm}[!t]
\caption{CoVer training loop.}
\label{alg:cover}
\begin{algorithmic}[1]
\State \textbf{Input:} task set $\mathcal{T}=\{\tau_1,\dots,\tau_N\}$ with GT tests $T^{\star}_\tau$; policy $\pi_\theta$; steps $M$; group sizes $m,K,k$ ($k<K$); learning rate $\eta$; KL coefficient $\beta$.
\For{$s=1$ to $M$}
  \For{each task $\tau\in\mathcal{T}$}
    \State Sample $m$ codes $\{c_i\}_{i=1}^{m}\sim\pi_\theta(\cdot\mid q_\tau,\text{coder})$.
    \State Sample $K$ tests $\{t_j\}_{j=1}^{K}\sim\pi_\theta(\cdot\mid q_\tau,\text{verifier})$.
    \State Compute graded $y\!\in\![0,1]^{m}$ via Eq.~\ref{eq:graded-y}.
    \State \textcolor{black}{Three-stage diversity-aware selection} (invalidity, input-string, execution-profile): rank by $b(\cdot)$ (Eq.~\ref{eq:dispersion}), keep $\mathcal{S}_\tau$ of size $k$.
    \State Build execution table $E\!\in\!\{0,1\}^{m\times k}$ for $(\{c_i\},\mathcal{S}_\tau)$.
    \State Compute $\mathcal{R}^{\mathrm{IG}}_{j}=r^{\mathrm{IG}}(t_j)$ for each $t_j\!\in\!\mathcal{S}_\tau$ (Eq.~\ref{eq:ig-reward}; zero unless $\widehat{\mathrm{Cov}}(E_{:,j},y)>0$).
    \State Coder rewards: $\mathcal{R}^{\mathrm{code}}_{i}=y_i$.
  \EndFor
  \State Compute group-normalised advantages $A^{\mathrm{code}}_{i}$, $A^{\mathrm{IG}}_{j}$ per task.
  \State $\theta\leftarrow\theta+\eta\nabla_\theta\mathcal{J}^{\mathrm{coder}}(\theta)$.
  \State $\theta\leftarrow\theta+\eta\nabla_\theta\mathcal{J}^{\mathrm{verifier}}(\theta)$.
\EndFor
\State \textbf{Output:} trained $\pi_\theta$.
\end{algorithmic}
\end{algorithm}

\subsection{Problem Formulation}
\label{sec:problem}

A coding task $\tau\!\sim\!\mathcal{T}$ consists of a problem statement $q_\tau$ and a GT test set $T^{\star}_\tau\!=\!\{u^{\star}_{\tau,g}\}_{g=1}^{G_\tau}$. A single policy $\pi_\theta$ generates sequences under two role-specific prompts: as \emph{coder} it samples $m$ candidate solutions $\{c_i\}_{i=1}^{m}\sim\pi_\theta(\cdot\mid q_\tau,\text{coder})$, and as \emph{verifier} it samples $K$ candidate tests $\{t_j\}_{j=1}^{K}\sim\pi_\theta(\cdot\mid q_\tau,\text{verifier})$. A deterministic sandbox $\mathcal{O}(c,t)\!\in\!\{0,1\}$ scores any code--test pair as pass~(1) or fail~(0). Running the sandbox yields two objects:

\paragraph{Graded GT-anchored correctness.}
\begin{equation}
\label{eq:graded-y}
y_i \;=\; \frac{1}{G_\tau}\sum_{g=1}^{G_\tau}\mathcal{O}(c_i,u^{\star}_{\tau,g})\;\in\;[0,1],
\end{equation}
the fraction of GT tests that $c_i$ passes, collected into $y\!=\!(y_1,\ldots,y_m)\!\in\![0,1]^m$.

\paragraph{Execution table.} Against a $k$-test subset $\mathcal{S}_\tau\!\subseteq\!\{t_j\}$ chosen by \textcolor{black}{the diversity-aware selection step} (\S\ref{ssec:diversity}), we record $E\!\in\!\{0,1\}^{m\times k}$ with $E_{ij}\!=\!\mathcal{O}(c_i,t_j)$. Each column $E_{:,j}$ is the pass/fail pattern of test $t_j$ across the $m$ codes and is the object the verifier is rewarded on.

The coder receives reward $\mathcal{R}^{\mathrm{code}}_i$ derived from $y$; the verifier receives $\mathcal{R}^{\mathrm{IG}}_j$ derived from $E$. Our goal is to maximise the coding objective $\mathbb{E}_{\pi_\theta(\cdot\mid q_\tau,\mathrm{coder})}[\mathcal{R}^{\mathrm{code}}]$, using the verifier role as training-time scaffolding whose GRPO updates flow back into the shared policy.

\subsection{Information-Gain Verifier Reward}
\label{ssec:igreward}

A test is informative when its pass/fail outcomes track codes' true correctness. A natural reward for each kept test $t_j\!\in\!\mathcal{S}_\tau$ is the mutual information between its column $E_{:,j}$ and the correctness vector $y$ from Equation~\ref{eq:graded-y}. MI alone, however, is unsigned: it credits any statistical dependence between $E_{:,j}$ and $y$, including adversarial patterns in which test passes track \emph{incorrect} codes. We propose a signed MI reward that zeros these out via a covariance-sign gate:
\begin{equation}
\label{eq:ig-reward}
r^{\mathrm{IG}}(t_j)\;=\;
\begin{cases}
\widehat{I}\bigl(E_{:,j}\,;\,y\bigr) & \text{if } \widehat{\mathrm{Cov}}(E_{:,j},y)>0,\\
0 & \text{otherwise,}
\end{cases}
\end{equation}
where $\widehat{\mathrm{Cov}}(E_{:,j}, y) = \tfrac{1}{m}\sum_{i}(E_{i,j}-\bar{E}_{\cdot,j})(y_i - \bar{y})$ is the sample covariance between the test's pass/fail vector and code quality: it is positive when above-average codes tend to pass $t_j$ and below-average codes tend to fail it, and negative when this relationship is inverted. The gate retains only tests whose pass/fail pattern moves in the same direction as correctness.

\paragraph{Estimator.} Both variables are discrete: $E_{i,j}\!\in\!\{0,1\}$ and $y_i\!\in\!\{0,\tfrac{1}{G_\tau},\ldots,1\}$. Their joint distribution fits a $2\!\times\!(G_\tau{+}1)$ contingency table populated by counting $(E_{i,j},y_i)$ outcomes. The plug-in estimator is
\begin{equation}
\label{eq:plug-in-mi}
\widehat{I}(E_{:,j};y) = \sum_{e,v}\widehat{P}(e,v)\log\frac{\widehat{P}(e,v)}{\widehat{P}(e)\,\widehat{P}(v)},
\end{equation}
where $e\!\in\!\{0,1\}$, $v\!\in\!\mathrm{supp}(y)$, and $0\log 0\!=\!0$. Because $y$ is discrete by construction, no binning or finite-sample bias correction is required.

\paragraph{Why graded $y$ is essential.} Binary $y\!\in\!\{0,1\}^{m}$ degenerates early in training: with almost no code passing all GT tests, $H(y)\!\approx\!0$ and $\widehat{I}(E_{:,j};y)\!\approx\!0$ for every test, leaving the verifier without signal in its most pliable phase. Graded $y$ keeps $H(y)$ nontrivial from the first GRPO step. A test distinguishing $6/7$-correct from $2/7$-correct codes contributes positive MI even when no code passes all GT tests.

\paragraph{What the IG reward zeros out.} Three test classes receive zero reward: (i)~\emph{constant tests} ($E_{:,j}\!\equiv\!0$ or $\equiv\!1$) yield $\hat{I}\!=\!0$; (ii)~\emph{independent tests} whose pass/fail vector is statistically independent of $y$; (iii)~\emph{anti-discriminative tests} with $\widehat{\mathrm{Cov}}(E_{:,j},y)\!\leq\!0$, suppressed by the sign gate. Positive reward is reserved for tests that cleanly stratify codes by correctness, directly penalising the permissiveness collapse inherent in pass-rate rewards.

\subsection{\textcolor{black}{Three-Stage Diversity-Aware Selection}}
\label{ssec:diversity}

A single policy sampled i.i.d.\@ produces tests that cluster around the most likely inputs, producing correlated execution columns. This correlation inflates the variance of per-test MI estimates and creates a reward amplification loop: redundant tests sharing the same pass/fail column receive correlated advantages, over-crediting the dominant test pattern and narrowing the verifier's output distribution \citep{yu-dapo,liu2025diversegrpo}. We address this with exact deduplication, a deliberately simple, zero-overhead mechanism whose effectiveness we justify empirically (\S\ref{sssec:test-selection}); soft similarity-based selection is an open direction.

We apply three discrete filters in sequence, following the principle that redundancy reduction improves estimator efficiency \citep{mirzasoleiman2020coresets}:

\begin{enumerate}[leftmargin=*,itemsep=1pt,topsep=2pt]
\item \textbf{Invalidity filter.} Remove tests that fail to parse or produce empty input/output.
\item \textbf{Input-string deduplication.} Remove tests sharing a whitespace-normalised input with another candidate, since under the deterministic oracle $\mathcal{O}$ identical inputs yield identical columns.
\item \textbf{Execution-profile deduplication.} Remove tests whose pass/fail signature across the $m$ codes exactly matches that of another retained test. Two such tests are functionally equivalent with respect to the current candidate set and contribute zero marginal information.
\end{enumerate}

We score each candidate test by the lexicographic key
\begin{equation}
\label{eq:dispersion}
  b(t_j) = \bigl\langle\, \mathbf{1}[t_j \text{ invalid}],\; d_{\mathrm{in}}(t_j),\; d_{\mathrm{col}}(t_j) \,\bigr\rangle,
\end{equation}
where $d_{\mathrm{in}}(t_j)$ counts other candidates sharing $t_j$'s normalised input and $d_{\mathrm{col}}(t_j)$ counts those sharing its execution profile. Sorting ascending by $b$ and retaining the top $k$ maximises the number of unique execution profiles. Invalid tests therefore rank last, entering the kept suite only when fewer than $k$ valid tests exist. The effective sample size of the kept suite is
\begin{equation}
  k_{\mathrm{eff}} \approx \frac{k}{1 + (k - 1)\,\bar{\rho}},
\end{equation}
where $\bar{\rho}$ is the average pairwise correlation of the kept columns. Diversity drives $\bar{\rho}$ toward zero and $k_{\mathrm{eff}}$ toward $k$, yielding a strictly lower-variance IG estimator than direct $k$-sampling at the same execution budget.

\subsection{Joint GRPO Update}
\label{ssec:grpo}

We optimize $\pi_\theta$ with GRPO \citep{shao-deepseekmath}:
\begin{equation}
\label{eq:rl-obj}
\begin{aligned}
&\mathcal{J}_{\mathrm{GRPO}}(\theta) =\\
&\mathbb{E}_{\{o^{(i)}\}\sim\pi_{\theta_{\mathrm{old}}}}\!\Biggl[\frac{1}{G}\sum_{i=1}^{G}\frac{1}{|o^{(i)}|}\sum_{t=1}^{|o^{(i)}|}\\
&\min\!\Bigl(\tfrac{\pi_\theta(o^{(i)}_t|x,o^{(i)}_{<t})}{\pi_{\theta_{\mathrm{old}}}(o^{(i)}_t|x,o^{(i)}_{<t})}A_i,\;
\mathrm{clip}(\cdot,\,1{-}\varepsilon,\,1{+}\varepsilon)A_i\Bigr)\Biggr].
\end{aligned}
\end{equation}
For the coder, $\mathcal{R}^{\mathrm{code}}_{i}\!=\!y_i$ (Eq.~\ref{eq:graded-y}); for the verifier, $\mathcal{R}^{\mathrm{IG}}_{j}\!=\!r^{\mathrm{IG}}(t_j)$ (Eq.~\ref{eq:ig-reward}), computed on the diversity-selected suite $\mathcal{S}_\tau$. Per-task rewards are mean-centred and standardised within each role's group; tasks with zero reward variance are dropped from that role's update.

%% file: sections/04_experiments.tex

\input{tables/table1_main_results}

\section{Experiments}
\label{sec:experiments}

\subsection{Experimental Settings}
\label{ssec:exp-settings}

\paragraph{Benchmarks.} We evaluate on five widely used coding benchmarks: LiveBench (128 tasks) \citep{white-livebench}, MBPP (221) \citep{austin-mbpp}, LiveCodeBench (v2, 511) \citep{jain-livecodebench}, CodeContests \citep{li2022competition}, and CodeForces (467) \citep{penedo2025codeforces}. Our CodeContests pool is restricted to difficulty $\leq\!2$ and split into 4.5k training and 239 evaluation tasks; the other four benchmarks are evaluation-only (Table~\ref{tab:data-splits}), and we confirmed the CodeForces set is disjoint from the training pool.

\paragraph{Training configuration.} We use Qwen2.5-7B-Instruct and Qwen2.5-14B-Instruct \citep{yang2024qwen} as base models, optimized with GRPO \citep{shao-deepseekmath}. At each step we draw $m\!=\!16$ candidate codes and $K\!=\!32$ candidate tests per task using vLLM \citep{kwon-vllm} with temperature~$1.0$ and top-$p$~$1.0$. Learning rate: $1\!\times\!10^{-6}$; KL coefficient $\beta\!=\!0.01$; clip $\varepsilon\!=\!0.2$; training: 350 steps on $2\!\times\!$B200 GPUs, with ablations trained under identical settings. The diversity-aware selection step retains $k\!=\!16$ tests per task.

\begin{figure}[h]
  \centering
  \includegraphics[width=0.7\columnwidth]{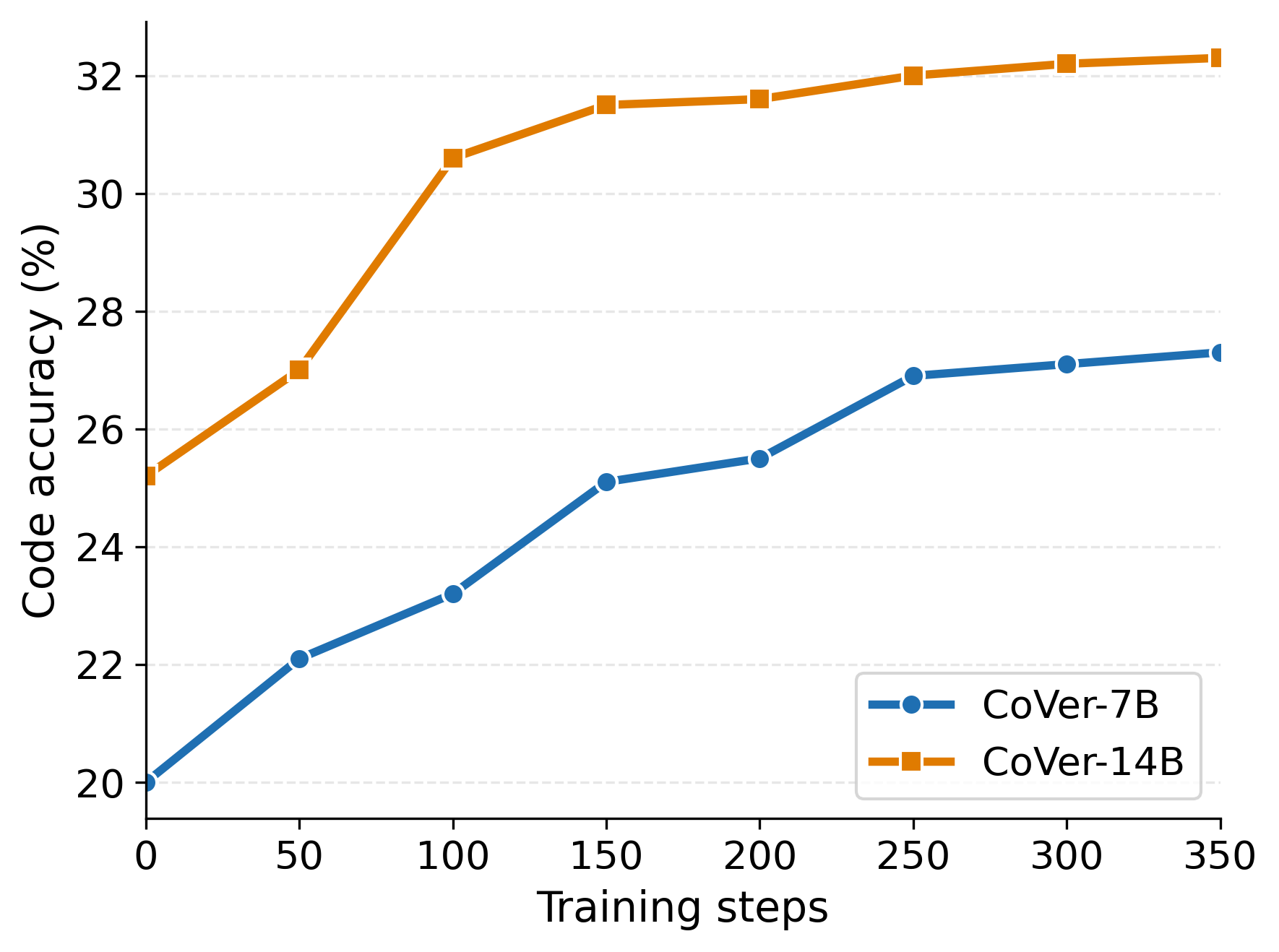}
  \caption{Code accuracy (pass@1, \%) versus training step for CoVer-7B and CoVer-14B. Both scales improve steadily over training, with the largest gains in the first $150$ steps.}
  \label{fig:accuracy-per-step}
\end{figure}

\paragraph{Baselines.} We compare against five families: (1)~\textbf{General-purpose backbones}: Qwen2.5-7B/14B-Instruct \citep{yang2024qwen}. (2)~\textbf{Code-pretrained LLMs}: Qwen2.5-7B/14B-Coder-Instruct \citep{hui-qwen25coder}. (3)~\textbf{RL with curated tests}: RLEF-8B \citep{gehring-rlef}, which fine-tunes on execution feedback from a fixed, human-curated test suite. (4)~\textbf{RL self-play methods}: Sol-Ver \citep{lin2025learning} (Llama 3.1 8B, coder--verifier co-training with DPO) and ReasonFlux-Coder-7B/14B \citep{wang-cure} (RL-based coder--verifier co-evolution). (5)~\textbf{Inference-time test filtering}: CodeT \citep{chen-codet} (ranks candidates by self-generated test pass count, no policy update).

\paragraph{Metric.} One-shot pass@1, reported per-benchmark and as macro-average across the five benchmarks.

\input{tables/table_ablations_dispersion}

\subsection{Results}
\label{ssec:results}

\subsubsection{Main Results}
\label{sssec:main}

Table~\ref{tab:main-results} reports one-shot pass@1 across five benchmarks. \textbf{CoVer achieves the best macro-average at both scales}: $35.94$ at 7B and $42.90$ at 14B, with both scales training stably and reaching these levels within $350$ steps (Figure~\ref{fig:accuracy-per-step}).

\input{tables/table_ablations_reward}
\label{sssec:test-selection}

\paragraph{7B scale.} CoVer improves over Qwen2.5-7B-Instruct by $+5.76$ macro points and surpasses the code-pretrained Qwen2.5-7B-Coder-Instruct ($32.46$) by $+3.48$ points, despite the latter's substantial code-pretraining advantage. CoVer achieves the highest macro-average among all 7B-scale methods, with the largest per-benchmark gains on LiveBench ($+8.9$) and LiveCodeBench ($+5.9$) over the Instruct backbone.

\paragraph{14B scale.} CoVer-14B is the top method on every benchmark, gaining $+11.35$ on LiveBench, $+3.14$ on MBPP, $+7.99$ on LiveCodeBench, $+7.32$ on CodeContests, and $+5.58$ on CodeForces over the Instruct backbone. It outperforms the strongest publicly reported 14B comparator, ReasonFlux-Coder-14B ($42.14$), on every benchmark.

\paragraph{} Table~\ref{tab:error-analysis} reports the per-benchmark pass@1 improvement ($\Delta$) of CoVer over both the instruct backbone and the coder-only baseline, together with the cross-seed standard deviation. All numbers are averaged over $4$ random seeds. For cross-training-run variance, we additionally trained a second, fully independent run at each scale; the runs differ by $0.61$ points at 7B and $0.35$ at 14B in macro-average pass@1, small relative to the $+5.76$ and $+7.08$ gains over the backbones (Appendix~\ref{ssec:appendix-seed-variance}).

\paragraph{Against RL baselines.} CoVer's margins are positive on every benchmark each RL baseline reports: $+16.3$\,pp on CodeContests (RLEF-8B), $+31.4$/$+5.6$\,pp on MBPP/LiveCodeBench (Sol-Ver), and positive on every benchmark at both scales against ReasonFlux-Coder. 




\subsubsection{Diversity-aware Selection Reduces Estimator Variance}

Table~\ref{tab:abl-dispersion} evaluates the effect of diversity-aware selection by comparing CoVer against a no-pruning baseline at \emph{matched sampling budget} on Qwen2.5-7B-Instruct. Both variants draw $K\!=\!32$ tests and execute all $m\!\times\!K$ code--test pairs; CoVer prunes to a diverse $k\!=\!16$ subset for the IG reward, while the baseline uses all $K\!=\!32$ columns. The comparison therefore isolates the value of column-level deduplication in the IG estimator.

Diversity-aware selection improves the macro-average pass@1 by $1.67$ points ($34.27 \rightarrow 35.94$), with positive gains on every benchmark. The consistent gain reflects how much signal the verifier loses when tests sampled i.i.d.\ from the policy produce correlated pass/fail columns. Using \emph{more} i.i.d.-sampled columns in the IG estimate does not compensate for their redundancy, because correlated columns contribute little to the effective sample size of the per-test MI estimate. Pruning recovers the lost signal at zero additional execution cost.

\subsubsection{Ablation Study}
\label{sssec:ablations}




Table~\ref{tab:abl-reward} ablates CoVer's three reward-design components on Qwen2.5-7B-Instruct, keeping all other settings fixed. We evaluate three variants:

\begin{itemize}[leftmargin=*,itemsep=1pt,topsep=2pt]
\item \textbf{A1: Pass-rate verifier reward.} Replaces the IG reward with a simplified reward for the unit test that assigns 1 if all correct codes pass it and 0 otherwise $r^{\mathrm{pass}}(t_j)\!=\!\mathbf{1}[\text{all correct codes pass } t_j]$.
\item \textbf{A2: Binary $y$.} Replaces the graded correctness vector $y$ with a binary all-or-nothing $y^{\mathrm{bin}}_i\!=\!\prod_g\mathcal{O}(c_i,u^{\star}_g)$.
\item \textbf{A3: Coder only.} Drops the verifier role entirely; trains on $R^{\mathrm{coder}}\!=\!y_i$ alone.
\end{itemize}

A1 drops the macro-average by $2.03$ points ($35.94 \rightarrow 33.91$), confirming that mutual information is a better-shaped signal than pass-rate for self-generated tests. A2 drops the macro-average by $1.19$ points ($35.94 \rightarrow 34.75$); the smaller gap relative to A1 is consistent with the entropy-bound argument of Section~\ref{ssec:igreward}, since partial correctness still provides usable signal once any code passes any GT test. A3 gives the largest degradation, $2.43$ points ($35.94 \rightarrow 33.51$), showing that the verifier role supplies gradient information that the coder loss alone cannot recover.

\subsubsection{Application to Test-Time Scaling}
\label{sssec:inference-scaling}

\input{tables/table_inference_scaling}
\input{tables/table3_error_analysis}

We evaluate CoVer as the generator inside CodeT \citep{chen-codet}, which samples multiple candidates and ranks them by the number of self-generated tests they pass. CodeT's effectiveness therefore depends on both the quality of the candidate codes and the discriminative power of the self-generated tests that score them, both of which are capabilities CoVer is designed to improve. Replacing the Qwen2.5-7B-Instruct code generator with CoVer-7B inside the CodeT pipeline raises the macro-average pass@1 from $38.97$ to $42.47$, a gain of $3.50$ points over the CodeT baseline and $12.29$ points over the bare Instruct backbone (Table~\ref{tab:inference-scaling}).



%% file: tables/table1_main_results.tex
\begin{table*}[t]
  \centering
  \small
  \setlength{\tabcolsep}{5pt}
  \begin{tabular}{l ccccc c}
    \toprule
    \textbf{Model} &
      \textbf{LiveBench} & \textbf{MBPP} & \textbf{LCB} &
      \textbf{CC} & \textbf{CF} & \textbf{Avg.} \\
    \midrule
    \multicolumn{7}{l}{\textit{Baselines ($\sim$7--8B scale)}}\\
    \midrule
    RLEF-8B (Llama 3.1 8B Inst) \citep{gehring-rlef}                   & -- & -- & -- & 9.7 & -- & -- \\
    Sol-Ver (Llama 3.1 8B) \citep{lin2025learning}  & -- & 41.0 & 27.24 & -- & -- & -- \\
    ReasonFlux-Coder-7B \citep{wang-cure}           & 37.1 & 70.2 & 31.2 & 25.9 & 8.2 & 34.52 \\
    Qwen2.5-7B-Coder-Instruct                       & 35.0 & 68.0 & 29.8 & 22.8 & 6.7 & 32.46 \\
    Qwen2.5-7B-Instruct                             & 31.1 & 66.3 & 26.9 & 21.2 & 5.4 & 30.18 \\
    \midrule
    \textbf{CoVer-7B (ours)}                         & \textbf{40.0} & \textbf{72.4} & \textbf{32.8} & \textbf{26.0} & \textbf{8.5} & \textbf{35.94} \\
    \midrule
    \multicolumn{7}{l}{\textit{Baselines ($\sim$14B scale)}}\\
    \midrule
    ReasonFlux-Coder-14B \citep{wang-cure}          & 47.5 & 78.5 & 40.5 & 32.1 & 12.1 & 42.14 \\
    Qwen2.5-14B-Coder-Instruct                      & 42.2 & 72.6 & 38.2 & 23.3 & 7.8 & 36.82 \\
    Qwen2.5-14B-Instruct                            & 36.4 & 76.3 & 33.5 & 25.6 & 7.3 & 35.82 \\
    \midrule
    \textbf{CoVer-14B (ours)}                        & \textbf{47.75} & \textbf{79.44} & \textbf{41.49} & \textbf{32.92} & \textbf{12.88} & \textbf{42.90} \\
    \bottomrule
  \end{tabular}
  \caption{CoVer achieves the highest macro-average one-shot pass@1 (\%) at both 7B and 14B scales. We compare against four families of baselines: general-purpose backbones (Qwen2.5-Instruct), code-pretrained LLMs (Qwen2.5-Coder-Instruct), Curated-test RL (RLEF), and RL self-play methods for code (Sol-Ver, ReasonFlux-Coder). LCB = LiveCodeBench, CC = CodeContests, CF = CodeForces. Backbone is noted in parentheses where it differs from Qwen2.5.}
  \label{tab:main-results}
\end{table*}

%% file: tables/table_ablations_dispersion.tex
\begin{table*}[!t]
  \centering
  \small
  \setlength{\tabcolsep}{5pt}
  \begin{tabular}{l ccccc c}
    \toprule
    \textbf{Variant (Qwen2.5-7B-Instruct)} &
      \textbf{LiveBench} & \textbf{MBPP} & \textbf{LCB} &
      \textbf{CC} & \textbf{CF} & \textbf{Avg.} \\
    \midrule
    \textbf{CoVer-7B (diversity selection, $k\!=\!16$)}  & \textbf{40.0} & \textbf{72.4} & \textbf{32.8} & \textbf{26.0} & \textbf{8.5} & \textbf{35.94} \\
    No pruning (all $K\!=\!32$ columns)               & 37.64 & 70.0 & 30.68 & 24.81 & 8.2 & 34.27 \\
    \midrule
    $\Delta$                                          & +2.36 & +2.4 & +2.12 & +1.19 & +0.3 & +1.67 \\
    \bottomrule
  \end{tabular}
  \caption{Three-stage diversity-aware selection ablation at \emph{matched sampling budget} ($K\!=\!32$, $m\!=\!16$). Both variants execute $m\!\times\!K$ code--test pairs; CoVer prunes to a diverse $k\!=\!16$ columns for the IG reward. The consistent gain confirms that removing redundant columns recovers verifier signal lost to correlated test outcomes.}
  \label{tab:abl-dispersion}
\end{table*}

%% file: tables/table_ablations_reward.tex
\begin{table}[!b]
  \centering
  \footnotesize
  \setlength{\tabcolsep}{3pt}
  \begin{tabular}{@{}l cccc c c@{}}
    \toprule
    \textbf{Variant} &
      \textbf{LB} & \textbf{MBPP} & \textbf{LCB} &
      \textbf{CC} & \textbf{CF} & \textbf{Avg.} \\
    \midrule
    \textbf{CoVer-7B} & \textbf{40.0} & \textbf{72.4} & \textbf{32.8} & \textbf{26.0} & \textbf{8.5} & \textbf{35.94} \\
    A1: pass-rate rew. & 36.32 & 69.25 & 31.25 & 24.94 & 7.8 & 33.91 \\
    A2: binary $y$ & 37.98 & 70.5 & 31.73 & 25.52 & 8.0 & 34.75 \\
    A3: coder only & 38.03 & 67.7 & 30.19 & 23.92 & 7.7 & 33.51 \\
    \bottomrule
  \end{tabular}
  \caption{Reward-design ablation on Qwen2.5-7B-Instruct.
  A1: pass-rate reward replaces IG; A2: binary $y$ replaces graded $y$; A3: no verifier role. Each ablation removes one design choice; all three degrade performance. All training runs are conducted with 350 optimization steps.}
  \label{tab:abl-reward}
\end{table}

%% file: tables/table_inference_scaling.tex
\begin{table*}[t]
  \centering
  \small
  \setlength{\tabcolsep}{5pt}
  \begin{tabular}{l cccccc}
    \toprule
    \textbf{Method} &
      \textbf{LiveBench} & \textbf{MBPP} & \textbf{LCB} &
      \textbf{CC} & \textbf{CF} & \textbf{Avg.} \\
    \midrule
    Qwen2.5-7B-Instruct (one-shot) & 31.1 & 66.3 & 26.9 & 21.2 & 5.4 & 30.18 \\
    CodeT ($n\!=\!8$)               & 44.53 & 75.95 & 37.76 & 28.03 & 8.56 & 38.97 \\
    \textbf{CodeT + CoVer-7B ($n\!=\!8$)}  & \textbf{50.78} & \textbf{79.63} & \textbf{40.70} & \textbf{29.70} & \textbf{11.56} & \textbf{42.47} \\
    \quad $\Delta$ vs CodeT         & \textit{+6.25} & \textit{+3.68} & \textit{+2.94} & \textit{+1.67} & \textit{+3.00} & \textit{+3.50} \\
    \bottomrule
  \end{tabular}
  \caption{Test-time scaling with CodeT \citep{chen-codet} ($n\!=\!8$ candidates). Replacing the backbone with CoVer-7B raises the macro-average by $+3.50$ points, confirming that co-training produces both a stronger coder and a more discriminative verifier.}
  \label{tab:inference-scaling}
\end{table*}

%% file: tables/table3_error_analysis.tex
\begin{table*}[t]
  \centering
  \small
  \setlength{\tabcolsep}{5pt}
  \begin{tabular}{l ccccc}
    \toprule
    \multirow{2}{*}{\textbf{Comparison}} &
      \textbf{LiveBench} & \textbf{MBPP} & \textbf{LCB} &
      \textbf{CC} & \textbf{CF} \\
      & $\Delta$ (std) & $\Delta$ (std) & $\Delta$ (std) & $\Delta$ (std) & $\Delta$ (std) \\
    \midrule
    CoVer-7B vs Qwen2.5-7B-Inst  & +8.9\,(0.36) & +6.1\,(0.47) & +5.9\,(0.31) & +4.8\,(0.33) & +3.1\,(0.28) \\
    CoVer-14B vs Qwen2.5-14B-Inst & +11.4\,(0.14) & +3.1\,(0.32) & +8.0\,(0.27) & +7.3\,(0.37) & +5.6\,(0.20) \\
    \midrule
    CoVer-7B vs Coder-7B          & +5.0\,(0.36) & +4.4\,(0.47) & +3.0\,(0.31) & +3.2\,(0.33) & +1.8\,(0.28) \\
    CoVer-14B vs Coder-14B        & +5.6\,(0.14) & +6.8\,(0.32) & +3.3\,(0.27) & +9.6\,(0.37) & +5.1\,(0.20) \\
    \bottomrule
  \end{tabular}
  \caption{Per-benchmark pass@1 improvement ($\Delta$) with cross-seed standard deviation (in parentheses) over $4$ seeds. CoVer shows consistent improvements across all benchmarks, with gains exceeding the cross-seed variance on every comparison.}
  \label{tab:error-analysis}
\end{table*}

%% file: sections/05_discussions.tex

\section{Conclusion}
\label{sec:discussions}

We introduced CoVer, a single-policy GRPO framework that co-trains an LLM as both coder and verifier, addressing two pathologies of prior self-play methods: permissiveness collapse from pass-rate rewards and concentration bias from i.i.d.\@ test sampling. CoVer replaces these with a covariance-gated MI verifier reward anchored to a graded GT-derived correctness signal and a three-stage diversity-aware selection step that prunes redundant execution columns to raise the effective sample size at fixed cost. On five coding benchmarks, CoVer achieves the best macro-average pass@1 at both 7B and 14B scales, surpassing code-pretrained variants while training from a general-purpose Instruct backbone. As a drop-in backbone for the CodeT reranking pipeline, CoVer-7B adds $+3.50$ points, confirming that co-training yields both a stronger coder and a more discriminative verifier.

This work demonstrates that framing verifier learning as an information-theoretic, variance-aware estimation problem yields a principled and empirically stronger path than pass-rate maximisation. The underlying principle is to reward a verifier for the information its probes add about a fixed correctness anchor, measured over a low-redundancy probe set. This recipe may extend to domains beyond unit-test generation. We hypothesise that mathematical reasoning (where partial proofs serve as probes) and structured prediction (where constraint-satisfaction checks serve as probes) are natural next steps, provided a graded correctness signal can be defined.

%% file: sections/06_limitations.tex
\section*{Limitations}
\label{sec:limitations}

Several factors constrain the scope of our conclusions. \textbf{Model family coverage.} Our experiments use the Qwen2.5-Instruct family at 7B and 14B, as well as Llama-3.1-Instruct at 8B (Appendix~\ref{ssec:appendix-cross-family}); transfer to additional families (e.g., DeepSeek-V2, Gemma) is not yet established. We initialise from the general-purpose Instruct variant rather than the code-pretrained Coder variant by design, so that gains can be attributed to CoVer rather than to code pretraining. \textbf{GT test dependency.} CoVer requires each training task to ship with at least one ground-truth test for computing the graded $y$; a fully bootstrapped variant that estimates $y$ from the execution table (e.g., via majority vote or EM) is left to future work, as it must address permissiveness collapse and coordinated hallucinations across the coder and verifier roles. \textbf{Exact deduplication only.} The diversity-aware selection step removes exact input-string and column duplicates but does not penalise high-correlation near-duplicates; soft column-similarity criteria (e.g., cosine-similarity clustering over execution-profile embeddings) may recover additional signal and are an open direction. \textbf{Determinism assumption.} CoVer presupposes deterministic sandbox execution of generated tests; tasks involving non-deterministic side effects, GUI interaction, or distributed systems fall outside its scope. \textbf{Distribution shift.} The verifier is trained on the policy's own code distribution; out-of-distribution code at deployment may be misclassified.

\section*{Ethical considerations}
\label{sec:ethics}

CoVer improves the reliability of code-generation models by training a single policy as both coder and verifier under GT supervision. Improvements in pass@1 do not constitute correctness guarantees; developers should independently verify generated code, especially in safety-critical contexts. Like any more capable code generator, CoVer marginally lowers the cost of malware authoring; we follow ACL's Publication Ethics policy regarding responsible release. All generated tests are executed inside a sandboxed Python subprocess with restricted system-call access, capped wall-clock time (5\,s per execution), and capped memory (512\,MB). GPU-hours and energy footprint are reported in Appendix~\ref{sec:appendix-c}. The benchmark datasets (LiveBench, MBPP, LiveCodeBench, CodeContests, CodeForces) are publicly released and used under their respective licences.

%% file: sections/appendix_b_additional_results.tex
\section{Prompts used by Coder and Tester}
\label{sec:appendix-b}

\begin{tcolorbox}[breakable,colback=gray!5,colframe=gray!50,title={\textbf{Coder Prompt}}]
    \begin{verbatim}
    <|im_start|>You are a helpful
    assistant that helps users solve
    programming problems. <|im_end|>
    <|im_start|>User: Think through
    the problem before writing your
    Python solution. Use input() for
    reading input and print() for
    producing output in your script.
    Ensure your output matches the
    required format exactly, with no
    extra spaces or lines.
    Here is the problem:
    <problem_statement> <|im_end|>
    <|im_start|>Assistant:
    \end{verbatim}
    \end{tcolorbox}
    
    \begin{tcolorbox}[breakable,colback=gray!5,colframe=gray!50,title={\textbf{Tester Prompt}}]
    \begin{verbatim}
    <|im_start|>You are a helpful
    assistant specialized in
    generating test examples for
    coding tasks. <|im_end|>
    <|im_start|>User: Given a coding
    task, your goal is not to write
    the solution, but to generate a
    new test example consisting of
    an input, an expected output,
    and an explanation.
    Here is the problem:
    <problem_statement>

    Your test example must be
    completely accurate and conform
    to the problem's format
    requirements, while also being
    discriminative enough to
    distinguish correct code from
    incorrect code.
    Begin by thinking carefully and
    reasoning step by step inside
    <reasoning> tags to derive an
    input and output you are
    confident are correct. A good
    approach is to first design an
    input you can reliably work
    through, then compute the output
    step by step. If you are unsure
    about the output, revise or
    redesign the input until you are
    certain. Skipping this process
    and directly providing
    input/output pairs is strongly
    discouraged, as it frequently
    leads to incorrect results.
    Once you have thoroughly
    completed your reasoning and
    derivation, your final response
    MUST follow this exact format:
    
    <reasoning>
    your explanation here.
    </reasoning>
    
    <answer>
    <input>
    your raw input here
    </input>
    <output>
    your raw output here
    </output>
    </answer>
    
     <|im_end|>
    <|im_start|>Assistant: <reasoning>
    \end{verbatim}
    \end{tcolorbox}

%% file: sections/appendix_c_experiment_details.tex
\section{Experiment Details}
\label{sec:appendix-c}

\subsection{Data Preprocessing}
\label{ssec:appendix-c2}

\paragraph{Benchmark coverage.} All five benchmarks consist of English-language natural-language problem statements paired with executable test cases, and all solutions are evaluated as Python~3.10 programs (\S\ref{ssec:appendix-c4}). They span two domains: \emph{competitive programming}, covered by CodeContests~\citep{li2022competition}, CodeForces~\citep{penedo2025codeforces}, and LiveCodeBench~\citep{jain-livecodebench}, where problems involve algorithmic reasoning over stdio I/O; and \emph{functional code completion}, covered by MBPP~\citep{austin-mbpp} and the function-level subset of LiveBench~\citep{white-livebench}, where problems specify a target function from a short description.

\paragraph{Format.} We adopt the stdio input/output format used in LiveBench~\citep{white-livebench}, LiveCodeBench~\citep{jain-livecodebench}, CodeContests~\citep{li2022competition}, and CodeForces~\citep{penedo2025codeforces}. For MBPP~\citep{austin-mbpp} and a subset of LiveBench/LiveCodeBench tasks that are released in a functional format, we use the pre-converted stdio versions provided by \citet{wang-cure}, in which each variable is placed on a separate line and lists are flattened into space-separated values on a single line.

\begin{table}[h]
    \centering
    \small
    \begin{tabular}{lrr}
    \toprule
    \textbf{Benchmark} & \textbf{Train} & \textbf{Eval} \\
    \midrule
    CodeContests (diff.\ $\leq\!2$) & 4{,}500 & 239 \\
    LiveBench            & -- & 128 \\
    MBPP                 & -- & 221 \\
    LiveCodeBench (v2)   & -- & 511 \\
    CodeForces           & -- & 467 \\
    \midrule
    Total                & 4{,}500 & 1{,}527 \\
    \bottomrule
    \end{tabular}
    \caption{Training and evaluation task counts. RL training uses only the CodeContests training partition; the other four benchmarks contribute no training data and are used solely for evaluation.}
    \label{tab:data-splits}
\end{table}

\paragraph{Licensing.} All artifacts are used in accordance with their licenses and intended research use. The Qwen2.5-7B/14B-Instruct backbones~\citep{yang2024qwen} are released under Apache~2.0, and Llama-3.1-8B-Instruct~\citep{grattafiori2024llama3}, used for the cross-family experiment in Appendix~\ref{ssec:appendix-cross-family}, under the Llama~3.1 Community License. Among the evaluation benchmarks, MBPP~\citep{austin-mbpp} is distributed under CC-BY-4.0, CodeContests~\citep{li2022competition} and LiveBench~\citep{white-livebench} under Apache~2.0, LiveCodeBench~\citep{jain-livecodebench} under the MIT license, and CodeForces~\citep{penedo2025codeforces} under CC-BY-4.0. Our training pipeline uses vLLM~\citep{kwon-vllm} and OpenRLHF (both Apache~2.0) and PyTorch (BSD-3).

\subsection{Hyperparameters}
\label{ssec:appendix-c3}

Table~\ref{tab:hyperparams} lists the full hyperparameter configuration.

\begin{table}[h]
\centering
\small
\begin{tabular}{ll}
\toprule
\textbf{Parameter} & \textbf{Value} \\
\midrule
Backbones & Qwen2.5-7B/14B-Instruct \\
 & Llama-3.1-8B-Instruct \\
Candidate codes ($m$) & 16 \\
Sampled tests ($K$) & 32 \\
Kept tests ($k$) & 16 \\
Learning rate & $1 \times 10^{-6}$ \\
KL coefficient ($\beta$) & 0.01 \\
Clip $\varepsilon$ & 0.2 \\
Training steps & 350 \\
Temperature (train) & 1.0 \\
Top-$p$ (train) & 1.0 \\
Temperature (eval) & 0.0 (greedy) \\
Hardware & $2 \times$ NVIDIA B200 \\
\midrule
\multicolumn{2}{l}{\textbf{Three-stage diversity-aware selection}} \\
Invalidity filter & parse failure or empty I/O \\
Input dedup & whitespace-normalised exact \\
Column dedup & exact pass/fail signature \\
Ordering & lexicographic on Eq.~\ref{eq:dispersion} \\
\midrule
\multicolumn{2}{l}{\textbf{IG estimator}} \\
Estimator & plug-in MI, $2 \times (G_\tau{+}1)$ table \\
Bias correction & none (finite discrete support) \\
\bottomrule
\end{tabular}
\caption{CoVer hyperparameter configuration.}
\label{tab:hyperparams}
\end{table}

\subsection{Sandbox Configuration}
\label{ssec:appendix-c4}

All execution of candidate code and self-generated tests takes place inside a sandboxed Python subprocess. Each execution is constrained by: (i)~restricted system-call access (no network, no filesystem writes outside a temporary directory), (ii)~a 5-second wall-clock timeout per code--test pair, and (iii)~a 512\,MB memory cap. Tests that trigger timeout or memory errors are treated as failures ($\mathcal{O}\!=\!0$). The sandbox uses Python 3.10 with \texttt{resource} module limits on Linux.

\subsection{CodeT Baseline}
\label{ssec:appendix-c-codet}

CodeT~\citep{chen-codet} is an inference-time test-filtering method that requires no policy update. Given a problem, it samples $n$ candidate solutions and a set of self-generated tests from the same model, then executes every candidate against every test. Candidates are grouped by their pass/fail signatures, and each candidate is scored by the number of self-generated tests it passes (weighted by the size of its agreement cluster); the highest-scoring candidate is returned. CodeT thus relies on both the quality of the sampled candidates and the discriminative power of the self-generated tests, the two capabilities CoVer is designed to improve.

\subsection{Ablation Definitions}
\label{ssec:appendix-c5}

The ablations referred to in Section~\ref{sec:experiments} are operationalised as follows.

\textbf{Direct-$k$ at matched test-execution budget.} Skip the diversity-aware selection step. Sample $k$ tests directly from $\pi_\theta(\cdot\mid q_\tau,\text{verifier})$ and use them as $\mathcal{S}_\tau$.

\textbf{(A1) Simple Pass-rate verifier reward.} Replace $r^{\mathrm{IG}}(t_j)$ (Equation~\ref{eq:ig-reward}) with the pass-rate reward $r^{\mathrm{pass}}(t_j)=\tfrac{1}{m}\sum_{i=1}^{m}E_{i,j}$, i.e., the fraction of candidate codes that pass test $t_j$. All other components of CoVer are unchanged.

\textbf{(A2) Binary $y$ inside the IG reward.} Replace the graded correctness signal $y_i=\tfrac{1}{G_\tau}\sum_g\mathcal{O}(c_i,u^{\star}_{\tau,g})$ with $y_i^{\mathrm{bin}}=\prod_g\mathcal{O}(c_i,u^{\star}_{\tau,g})\in\{0,1\}$. All other components are unchanged.

\textbf{(A3) Coder reward only.} Drop the verifier role entirely and train $\pi_\theta$ on the coder reward $R^{\mathrm{coder}}_i=y_i$ alone. The diversity-aware selection step, the IG estimator, and the verifier-side GRPO update are disabled; the policy is updated solely against the graded GT-anchored correctness signal.

\subsection{Compute Budget}
\label{ssec:appendix-c7}

Training CoVer-7B for 350 steps requires approximately 72 GPU-hours on $2 \times$ B200 GPUs; CoVer-14B requires approximately 96 GPU-hours. Total wall-clock time per run is 36 hours (7B) and 48 hours (14B). We use vLLM v0.6.0 \citep{kwon-vllm} for inference, PyTorch 2.4, and the OpenRLHF framework for GRPO training.

\paragraph{Cost efficiency vs.\ coder-only.} We compare CoVer-7B against the coder-only variant (A3, \S\ref{ssec:appendix-c5}) trained for the same 350 steps under matched infrastructure; reported GPU-hours include all CPU-side sandbox executions. Coder-only reaches $33.50$ macro pass@1 ($+3.32$ over the Instruct backbone) in 46 GPU-hours, while CoVer reaches $35.94$ ($+5.76$) in 72 GPU-hours. This is $0.80$ macro points per 10 GPU-hours for CoVer against $0.72$ for coder-only, so the verifier role costs additional compute without reducing per-GPU-hour efficiency, while reaching a higher final accuracy.





%% file: sections/appendix_additional_results.tex
\section{Additional Results}
\label{sec:appendix-additional}

\subsection{Cross-Family Generalization}
\label{ssec:appendix-cross-family}

To address generality beyond the Qwen family, we additionally train and evaluate CoVer on Llama-3.1-8B-Instruct~\citep{grattafiori2024llama3}, using the same 350-step protocol, the same hyperparameters (Table~\ref{tab:hyperparams}), and the same evaluation setup as the Qwen2.5 runs.

Table~\ref{tab:llama-cross-family} reports the results. CoVer improves the Llama backbone on every benchmark, raising the macro-average from $19.38$ to $28.31$ ($+8.94$ points), with the largest gains on LiveBench ($+13.96$) and LiveCodeBench ($+11.73$). These results show that the mechanisms are not Qwen-specific.

\subsection{Training-Run Variance}
\label{ssec:appendix-seed-variance}

For cross-training-run variance, we additionally trained a second, fully independent run at each scale under an identical protocol. Table~\ref{tab:seed-variance} reports both runs. The macro-average differs by $0.61$ points at 7B and $0.35$ at 14B, and no single benchmark differs by more than $1.85$. All four runs exceed the corresponding Qwen2.5-Instruct backbone on every benchmark.

\input{tables/table_verifier_calibration}

\subsection{Direct Verifier Analysis}
\label{ssec:appendix-verifier}

\input{tables/table_llama_cross_family}
\input{tables/table_seed_variance}

Sections~\ref{sssec:main} and~\ref{sssec:inference-scaling} evaluate the verifier through its downstream generation and candidate selection. To assess the verifier directly, we evaluate its output on held-out problems under the CodeT setup of Section~\ref{sssec:inference-scaling}. We bucket candidate solutions by how many of the $k\!=\!16$ generated tests they pass and report the fraction of each bucket that is correct under the GT tests (Table~\ref{tab:verifier-calibration}); the GT-correctness rate rises monotonically across all five buckets, from $4.2\%$ to $86.8\%$, showing that the verifier's acceptance count is a meaningful graded correctness signal.

%% file: tables/table_verifier_calibration.tex
\begin{table}[H]
  \centering
  \small
  \begin{tabular}{lc}
    \toprule
    \textbf{Tests passed (of $k\!=\!16$)} & \textbf{GT-correct} \\
    \midrule
    0--3     & 4.2\% \\
    4--7     & 17.4\% \\
    8--11    & 37.1\% \\
    12--15   & 53.7\% \\
    16 (all) & 86.8\% \\
    \bottomrule
  \end{tabular}
  \caption{Fraction of candidate solutions correct under the GT tests, bucketed by the number of CoVer-generated tests passed (held-out problems).}
  \label{tab:verifier-calibration}
\end{table}

%% file: tables/table_llama_cross_family.tex
\begin{table*}[t]
  \centering
  \small
  \setlength{\tabcolsep}{5pt}
  \begin{tabular}{l ccccc c}
    \toprule
    \textbf{Model} &
      \textbf{LiveBench} & \textbf{MBPP} & \textbf{LCB} &
      \textbf{CC} & \textbf{CF} & \textbf{Avg.} \\
    \midrule
    Llama-3.1-8B-Instruct                       & 17.38 & 49.97 & 14.56 & 12.47 & 2.50 & 19.38 \\
    \textbf{CoVer-Llama-8B (ours)}              & \textbf{31.34} & \textbf{59.04} & \textbf{26.29} & \textbf{20.10} & \textbf{4.80} & \textbf{28.31} \\
    \midrule
    $\Delta$                                    & +13.96 & +9.07 & +11.73 & +7.63 & +2.30 & +8.94 \\
    \bottomrule
  \end{tabular}
  \caption{Cross-family generalization. One-shot pass@1 (\%) for CoVer trained on Llama-3.1-8B-Instruct under the same 350-step protocol and hyperparameters used for the Qwen2.5 runs.}
  \label{tab:llama-cross-family}
\end{table*}

%% file: tables/table_seed_variance.tex
\begin{table*}[t]
  \centering
  \small
  \setlength{\tabcolsep}{5pt}
  \begin{tabular}{lcccccc}
    \toprule
    \textbf{Run} & \textbf{LiveBench} & \textbf{MBPP} & \textbf{LCB} & \textbf{CC} & \textbf{CF} & \textbf{Avg.} \\
    \midrule
    \multicolumn{7}{l}{\textit{CoVer-7B}} \\
    Run 1 & 40.00 & 72.40 & 32.80 & 26.00 & 8.50 & 35.94 \\
    Run 2 & 39.06 & 70.55 & 32.61 & 25.73 & 8.70 & 35.33 \\
    \midrule
    \multicolumn{7}{l}{\textit{CoVer-14B}} \\
    Run 1 & 47.75 & 79.44 & 41.49 & 32.92 & 12.88 & 42.90 \\
    Run 2 & 48.29 & 78.73 & 42.31 & 33.79 & 13.14 & 43.25 \\
    \bottomrule
  \end{tabular}
  \caption{Two fully independent training runs at each scale under an identical protocol. Macro-averages differ by $0.61$ points at 7B and $0.35$ at 14B, well below the $+5.76$ and $+7.08$ gains over the respective Qwen2.5-Instruct backbones.}
  \label{tab:seed-variance}
\end{table*}

%% file: sections/appendix_a_proofs.tex
\section{Formal Properties of the IG Reward and Diversity-Aware Selection}
\label{sec:appendix-a}

The propositions below isolate individual mechanisms of CoVer under simplifying assumptions rather than characterize the full end-to-end pipeline dynamics. Proposition~\ref{prop:graded-vs-binary} uses approximately homogeneous, conditionally independent GT-test outcomes to analyze early-training signal density. Proposition~\ref{prop:dispersion-variance} isolates the variance effect of exact duplicate execution profiles and does not model residual correlations among non-duplicate tests or policy-update dynamics. We evaluate these mechanisms under the full training pipeline through the corresponding ablations in \S\ref{ssec:results}.

\subsection{Notation}
\label{ssec:appendix-a-notation}

Fix a task $\tau$ with GT test set $T^{\star}_\tau$ of size $G_\tau$. Let $\{c_1,\dots,c_m\}$ be candidate codes and $\{t_1,\dots,t_K\}$ candidate tests from $\pi_\theta$. Let $\mathcal{S}_\tau\!\subseteq\!\{t_1,\dots,t_K\}$ of size $k\!<\!K$ be the suite after the three-stage diversity-aware selection step (Eq.~\ref{eq:dispersion}), with execution table $E\!\in\!\{0,1\}^{m\times k}$, $E_{ij}\!=\!\mathcal{O}(c_i,t_j)$. The graded correctness signal is $y\!\in\!\{0,\tfrac{1}{G_\tau},\dots,1\}^{m}$.

\subsection{IG Reward: Behaviour on Degenerate Tests}
\label{ssec:appendix-a-igproperties}

\begin{proposition}[Zero reward on degenerate tests]
\label{prop:ig-zero}
Let $r^{\mathrm{IG}}(t_j)$ denote the signed MI reward (Eq.~\ref{eq:ig-reward}). Then:
\begin{enumerate}
\item If $E_{:,j}$ is constant ($E_{ij}\!=\!E_{i'j}$ for all $i,i'$), then $\widehat{I}(E_{:,j};y)\!=\!0$.
\item If $E_{:,j}$ and $y$ are statistically independent, $I(E_{:,j};y)\!=\!0$ and $\widehat{I}(E_{:,j};y)$ concentrates around zero at rate $O_p(1/m)$.
\item If $\widehat{\mathrm{Cov}}(E_{:,j},y)\!\leq\!0$, then $r^{\mathrm{IG}}(t_j)\!=\!0$ by the covariance gate.
\end{enumerate}
\end{proposition}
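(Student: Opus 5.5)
The plan is to treat the three parts separately, since each rests on a different mechanism: an algebraic identity for the plug-in estimator, standard asymptotics of the likelihood-ratio statistic, and the definition of the gate in Eq.~\ref{eq:ig-reward}.

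\textbf{Part 1: constant columns.} We work directly with the plug-in estimator in Eq.~\ref{eq:plug-in-mi}. If $E_{:,j}\equiv e_0$, the empirical marginal satisfies $\widehat P(e_0)=1$ and $\widehat P(1-e_0)=0$. Then $\widehat P(e,v)=0$ whenever $e\neq e_0$, and $\widehat P(e_0,v)=\widehat P(v)=\widehat P(e_0)\widehat P(v)$. Every term with $e=e_0$ therefore has log-ratio $\log 1=0$. Every term with $e\ne e_0$ vanishes by the convention $0\log 0=0$, so $\widehat I=0$ exactly. Equivalently, $\widehat I=\widehat H(E_{:,j})-\widehat H(E_{:,j}\mid y)$ with $0\le \widehat H(E_{:,j}\mid y)\le \widehat H(E_{:,j})=0$. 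Since $\widehat I=0$, the reward is $0$ regardless of the gate; in fact $\widehat{\mathrm{Cov}}=0$ as well, so the gate closes too.

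\textbf{Part 2: independent columns.} We model the pairs $(E_{ij},y_i)$ as i.i.d.\ draws over $i$ from a joint law on the finite set $\{0,1\}\times\{0,\tfrac1{G_\tau},\dots,1\}$ that factorises into its marginals, so $I=0$ by definition. For the rate, we use the identity $2m\,\widehat I=G^2$, where $G^2$ is the log-likelihood-ratio statistic for independence in the $2\times (G_\tau+1)$ contingency table (with $\widehat I$ in nats). Under independence, $G^2\Rightarrow\chi^2_{d}$ with $d=(2-1)(|\mathrm{supp}(y)|-1)\le G_\tau$, which gives $\widehat I=O_p(1/m)$. As an alternative that avoids asymptotics, one can bound $\mathbb E[\widehat I]$ via the known bias expansion $\mathbb E\widehat I=\tfrac{d}{2m}+O(m^{-2})$, combined with $\widehat I\ge 0$ and Markov's inequality. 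Levels of $y$ with zero probability are simply dropped, which only lowers $d$.

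\textbf{Part 3: the covariance gate.} This part is immediate from the case split in Eq.~\ref{eq:ig-reward}: the nonzero branch requires $\widehat{\mathrm{Cov}}(E_{:,j},y)>0$, so $\widehat{\mathrm{Cov}}\le 0$ forces $r^{\mathrm{IG}}(t_j)=0$.

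\textbf{Main obstacle.} Part 2 needs the most care. The statement mixes population independence with a finite-sample estimator, so the i.i.d.-over-codes sampling model must be stated explicitly. The $\chi^2$ limit also requires the support of $y$ to be fixed as $m\to\infty$; this holds here because $G_\tau$ is fixed per task, and that is what makes the degrees of freedom bounded. We will also remark that $r^{\mathrm{IG}}\le\widehat I$, so the reward inherits the same $O_p(1/m)$ rate.
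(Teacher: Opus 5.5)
Your Parts~1 and~3 match the paper's proof. The paper argues Part~1 via $I\le H(E_{:,j})=0$ applied to the plug-in distribution, which is your entropy-decomposition remark, and Part~3 is the same case split on the gate.

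Part~2 is where you take a genuinely different and stronger route. The paper only notes two things: under independence the empirical joint converges to the product of marginals, and the plug-in MI is nonnegative. That gives consistency, $\widehat I\to 0$, i.e.\ $o_p(1)$, but it never justifies the stated $O_p(1/m)$ rate. Your identity $2m\,\widehat I=G^2$, combined with the $\chi^2_d$ limit under independence, does deliver the rate. Here $d\le G_\tau$ stays bounded because the support of $y$ is fixed per task. You also make explicit the i.i.d.-over-codes sampling model that the paper leaves implicit. The cost is reliance on Wilks-type asymptotics.

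One small correction: your ``alternative that avoids asymptotics'' is itself an asymptotic bias expansion. For a genuinely finite-sample bound, use
$\widehat I=\min_{Q_E,Q_V}D(\widehat P_{EV}\,\|\,Q_EQ_V)\le D(\widehat P_{EV}\,\|\,P_EP_V)\le\chi^2(\widehat P_{EV}\,\|\,P_EP_V)$.
Under independence the last quantity has expectation exactly $(s-1)/m$, where $s\le 2(G_\tau+1)$ is the number of positive-probability cells. Markov's inequality then gives $\Pr(m\widehat I>M)\le (s-1)/M$ for every $m$, hence $\widehat I=O_p(1/m)$ with no limit theorem needed.
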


\begin{proof}
(1)~Constant $E_{:,j}$ gives $H(E_{:,j})\!=\!0$, hence $I(E_{:,j};y)\!\leq\!H(E_{:,j})\!=\!0$ and the plug-in estimator is exactly zero. (2)~Under independence the empirical joint converges to the product of marginals; the plug-in MI is nonnegative by the information inequality. (3)~Immediate from Eq.~\ref{eq:ig-reward}: the gate evaluates $\widehat{\mathrm{Cov}}$ before the MI estimator and any nonpositive value triggers the zero branch.
\end{proof}

Part~(1) formalises the claim that permissive tests (all ones) and universally failing tests (all zeros) receive zero reward. Part~(2) shows that uninformative tests with pass/fail patterns independent of correctness receive near-zero reward. Part~(3) confirms that anti-discriminative tests, those failing high-quality codes and passing low-quality ones, are suppressed. Pass-rate rewards lack all three properties: a permissive test receives maximum pass-rate reward, driving permissiveness collapse.

\subsection{Graded vs.\ Binary $y$: Early-Training Signal Density}
\label{ssec:appendix-a-graded}

\begin{proposition}[Graded $y$ preserves $H(y)$ when fully-correct codes are rare]
\label{prop:graded-vs-binary}
Assume per-GT-test pass probability $p_g\!\in\!(0,1)$, approximately constant across $G_\tau$ tests and, for analytic tractability, conditionally independent given the code. Under binary aggregation $y^{\mathrm{bin}}_i\!=\!\prod_g\mathcal{O}(c_i,u^{\star}_{\tau,g})$, $\Pr(y^{\mathrm{bin}}_i\!=\!1)\!=\!p_g^{G_\tau}\!\to\!0$ as $G_\tau$ grows when $p_g\!<\!1$, so $H(y^{\mathrm{bin}})\!\to\!0$ and $\widehat{I}(E_{:,j};y^{\mathrm{bin}})\!\leq\!H(y^{\mathrm{bin}})\!\to\!0$ for every test. Under graded aggregation $y_i\!\sim\!\mathrm{Binom}(G_\tau,p_g)/G_\tau$, $H(y)\!>\!0$ for $p_g\!\in\!(0,1)$, keeping the IG upper bound nontrivial from the first training step.
\end{proposition}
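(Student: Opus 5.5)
The proof splits into a binary half and a graded half. Both rest on the modelling assumption in the statement: given a code $c_i$, its $G_\tau$ GT outcomes $\mathcal{O}(c_i,u^{\star}_{\tau,g})$ are i.i.d.\ Bernoulli$(p_g)$.

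\emph{Binary half.} The product $y^{\mathrm{bin}}_i$ equals $1$ exactly when all $G_\tau$ outcomes equal $1$. By conditional independence, $\Pr(y^{\mathrm{bin}}_i=1)=p_g^{G_\tau}=:q$. Since $p_g<1$, we have $q\to 0$ geometrically as $G_\tau\to\infty$. Each $y^{\mathrm{bin}}_i$ is Bernoulli$(q)$, so its entropy is $h(q)=-q\log q-(1-q)\log(1-q)$.

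I will show $h(q)\to0$ as $q\to0$. This follows from $q\log q\to0$ together with $(1-q)\log(1-q)\to0$. The convergence is even quantitative: $h(q)\le q\log(e/q)$.

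For the population-level MI, the information inequality gives $I(E_{:,j};y^{\mathrm{bin}})\le H(y^{\mathrm{bin}})=h(q)\to0$. The paper's estimator is the plug-in one, so I also need $\widehat{I}\le \widehat{H}(y^{\mathrm{bin}})$. This holds because the plug-in MI is exactly the MI of the empirical joint distribution, and the inequality $I\le H$ applies to any joint distribution. It then remains to relate $\widehat{H}(y^{\mathrm{bin}})$ to $H(y^{\mathrm{bin}})$. The empirical frequency of ones is Binomial$(m,q)/m$, and with probability $(1-q)^m\to1$ it is zero, which makes $\widehat{H}=0$ exactly. In the remaining case, $\widehat{H}\le h(\hat q)$ with $\mathbb{E}\hat q=q$, and concavity of $h$ gives $\mathbb{E}\,h(\hat q)\le h(q)\to 0$.

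I expect this step to be the main obstacle. As literally written, the statement bounds the random $\widehat I$ by the population $H(y^{\mathrm{bin}})$, so I will phrase the conclusion as $\widehat I\le\widehat H(y^{\mathrm{bin}})$ with $\widehat H\to0$ in probability and in expectation. I will note that the statement is shorthand for this.

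\emph{Graded half.} Given $p_g$, the count $G_\tau y_i$ is Binomial$(G_\tau,p_g)$. For $p_g\in(0,1)$ every atom $0,1/G_\tau,\dots,1$ has positive mass $\binom{G_\tau}{v}p_g^v(1-p_g)^{G_\tau-v}$. A distribution with at least two atoms of positive mass has strictly positive Shannon entropy, hence $H(y)>0$. I can also add the standard fact that this entropy grows like $\tfrac12\log(2\pi e\,G_\tau p_g(1-p_g))+o(1)$. So, unlike the binary case, it does not vanish as $G_\tau$ grows.

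Since $I(E_{:,j};y)\le \min\{H(E_{:,j}),H(y)\}$, the upper bound on the reward stays at $\min\{\log 2, H(y)\}>0$. This involves no dependence on training progress beyond $p_g\in(0,1)$, which is what ``from the first training step'' means here. I will close by remarking that nontriviality of the upper bound is what the statement asserts, as opposed to positivity of the realized MI for every test.
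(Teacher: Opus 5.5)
Your proposal is correct and follows the paper's own argument. The conjunction of $G_\tau$ conditionally independent Bernoulli$(p_g)$ outcomes is Bernoulli$(p_g^{G_\tau})$, whose entropy vanishes; the normalised Binomial has full support and hence positive entropy; and $I\le H(y)$ closes both halves. Your extra step is a genuine tightening that the paper's proof glosses over, since the paper only cites the population inequality even though the statement is written for the plug-in $\widehat I$: you bound $\widehat I$ by the plug-in $\widehat H(y^{\mathrm{bin}})$, the mutual information of the empirical joint, and then show $\widehat H\to 0$ in probability and in expectation via Jensen.
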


\begin{proof}
Binary aggregation is the conjunction of $G_\tau$ Bernoulli($p_g$) events with success probability $p_g^{G_\tau}$; the entropy of a Bernoulli with vanishing success probability vanishes. Graded aggregation is a Binomial sum normalised by $G_\tau$; its entropy is positive on $p_g\!\in\!(0,1)$. The bound $I(E_{:,j};y)\!\leq\!H(y)$ is the standard information inequality.
\end{proof}

\subsection{Diversity-aware Selection Reduces Estimator Variance}
\label{ssec:appendix-a-dispersion}

\begin{proposition}[Variance reduction]
\label{prop:dispersion-variance}
Let $\widehat{I}_j\!:=\!\widehat{I}(E_{:,j};y)$ and $\widehat{I}_{\mathcal{S}}\!=\!\tfrac{1}{k}\sum_{j\in\mathcal{S}}\widehat{I}_j$. Let $\bar\rho(\mathcal{S})$ be the average pairwise correlation of $\{\widehat{I}_j\}_{j\in\mathcal{S}}$. Then
\[
\mathrm{Var}(\widehat{I}_{\mathcal{S}}) = \frac{\sigma^2}{k_{\mathrm{eff}}(\mathcal{S})}, \quad k_{\mathrm{eff}}(\mathcal{S}) := \frac{k}{1+(k-1)\bar\rho(\mathcal{S})}.
\]
Under any policy placing excess mass on a low-entropy input region, direct $k$-sampling yields $\bar\rho(\mathcal{S}^{\mathrm{direct}})\!>\!\bar\rho(\mathcal{S}^{\mathrm{div}})$, hence $\mathrm{Var}(\widehat{I}_{\mathcal{S}^{\mathrm{div}}})\!<\!\mathrm{Var}(\widehat{I}_{\mathcal{S}^{\mathrm{direct}}})$ at the same $k$ and execution budget.
\end{proposition}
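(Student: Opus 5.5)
The plan has two parts. First I derive the displayed identity as a finite-sample variance computation. Then I turn the comparison into a statement about how often exact duplicate columns appear under each selection rule, and use that the map $\bar\rho\mapsto k/(1+(k-1)\bar\rho)$ is decreasing.

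\textbf{Step 1 (the identity).} Write $\mathrm{Var}(\widehat{I}_{\mathcal{S}})=k^{-2}\bigl[\sum_{j}\mathrm{Var}(\widehat{I}_j)+\sum_{j\neq j'}\mathrm{Cov}(\widehat{I}_j,\widehat{I}_{j'})\bigr]$. The stated formula has a single $\sigma^2$, so it implicitly assumes the per-test estimators are homoscedastic, $\mathrm{Var}(\widehat{I}_j)=\sigma^2$ for all $j$; I would state this explicitly. Define $\bar\rho(\mathcal{S})=\frac{1}{k(k-1)}\sum_{j\neq j'}\mathrm{Corr}(\widehat{I}_j,\widehat{I}_{j'})$. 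The bracket then equals $k\sigma^2+k(k-1)\bar\rho\,\sigma^2$, and dividing by $k^2$ gives exactly $\sigma^2/k_{\mathrm{eff}}$. Without homoscedasticity, the same computation goes through with $\sigma^2$ replaced by the average variance and $\bar\rho$ by a variance-weighted average correlation. I would note this as a remark rather than carry it through.

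\textbf{Step 2 (monotonicity).} For $\bar\rho>-1/(k-1)$, $k_{\mathrm{eff}}$ is strictly decreasing in $\bar\rho$. Hence it suffices to show $\bar\rho(\mathcal{S}^{\mathrm{direct}})>\bar\rho(\mathcal{S}^{\mathrm{div}})$, and the variance inequality follows at equal $k$. Both suites execute the same $m\times k$ table for the IG estimate, so the execution budget is matched by construction.

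\textbf{Step 3 (the correlation comparison).} I would model pairwise correlation as a mixture over whether two kept columns coincide. If $E_{:,j}=E_{:,j'}$ exactly, then $\widehat{I}_j=\widehat{I}_{j'}$ deterministically, because the plug-in estimator of Eq.~\ref{eq:plug-in-mi} depends only on the column and $y$; so the pair has correlation $1$. For distinct columns I would assume a common residual correlation $\rho_0<1$, the same under both selection rules. This is exactly the simplification flagged in the section preamble: residual correlations among non-duplicate tests are not modeled.

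Let $q_{\mathrm{direct}}$ be the fraction of duplicate pairs in $k$ i.i.d.\ draws from $\pi_\theta$. Then $\bar\rho(\mathcal{S}^{\mathrm{direct}})=q_{\mathrm{direct}}\cdot 1+(1-q_{\mathrm{direct}})\rho_0$. The lexicographic ordering by $b(\cdot)$ in Eq.~\ref{eq:dispersion} places all unique profiles first. So whenever the $K$-pool contains at least $k$ distinct valid profiles, $q_{\mathrm{div}}=0$ and $\bar\rho(\mathcal{S}^{\mathrm{div}})=\rho_0$; more generally $q_{\mathrm{div}}\le q_{\mathrm{direct}}$.

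Finally, I formalise ``excess mass on a low-entropy input region'' as some execution profile having probability $\pi_{\max}>0$ under the induced profile distribution. Then $q_{\mathrm{direct}}\ge$ the collision probability $\sum_v \pi_v^2\ge\pi_{\max}^2>0$. Combined with $1-\rho_0>0$, this gives $\bar\rho(\mathcal{S}^{\mathrm{direct}})-\bar\rho(\mathcal{S}^{\mathrm{div}})=(q_{\mathrm{direct}}-q_{\mathrm{div}})(1-\rho_0)>0$, which is the required strict inequality.

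\textbf{Main obstacle.} The difficulty is Step 3. The claim as stated is only true under modeling assumptions: homoscedasticity, the same residual correlation $\rho_0<1$ for non-duplicates under both rules, and enough distinct profiles in the $K$-pool that $q_{\mathrm{div}}<q_{\mathrm{direct}}$ strictly. A further subtlety is that the diverse suite is selected using the data itself, so its $\widehat{I}_j$ are conditioned on a selection event.

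My first attempt would condition on the realised candidate set $\{c_i\}$ and $y$, and treat randomness over test sampling only. In that setting duplicates are exact equalities and the mixture decomposition is clean. I would present the result as holding under these stated assumptions, consistent with the section preamble.
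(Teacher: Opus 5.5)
Your route is the paper's own proof made more explicit: the equicorrelated-average identity, identical columns as the $\rho=1$ pairs, selection removing them once $k$ distinct profiles exist, and direct sampling keeping them with positive probability. The gap is in the load-bearing move of Step~3, and the paper's proof makes the same move.

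Two slots whose columns \emph{happen} to coincide on one realisation do not thereby have $\mathrm{Corr}(\widehat{I}_j,\widehat{I}_{j'})=1$. The $\bar\rho$ in your Step~1 identity is a population correlation between fixed slots. It cannot be written as $q\cdot 1+(1-q)\rho_0$ by averaging over the duplicate event, because the law of total covariance brings in conditional variances and a between-event term that this mixture drops. Take exactly the setting you propose, with $\{c_i\}$ and $y$ fixed and randomness only over test sampling. Each $\widehat{I}_j$ is then a deterministic function of the i.i.d.\ draw $t_j$, so the direct-sampled $\widehat{I}_j$ are independent and $\bar\rho(\mathcal{S}^{\mathrm{direct}})=0$ however concentrated the policy is. Concretely, take two profiles with probabilities $0.9$ and $0.1$, different IG values, and $k=2$. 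Your formula gives $\bar\rho=0.82+0.18\rho_0\ge 0.64$ for every admissible $\rho_0$, while the true correlation is $0$. Concentration enters through $\sigma^2$ and through the marginal law of the selected slots, not through the fixed-slot correlation. Relatedly, your Step~2 reduction to a comparison of $\bar\rho$ silently needs the same $\sigma^2$ under both rules. That need not hold, because selection up-weights rare profiles and so changes the marginal law of each kept $\widehat{I}_j$.

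To make this a derivation, you need a source of randomness under which duplicates are equal almost surely, not merely on one draw. One option is to fix the sampled tests and let the randomness be over the $m$ code draws. Two textually identical tests (same input and expected output) then give $\widehat{I}_j=\widehat{I}_{j'}$ for every code sample, hence genuine correlation $1$. The input-string stage removes such pairs, and a policy with an atom on some test string puts such a pair into a direct suite with positive probability. The comparison then becomes one of variances conditional on the suite. Execution-profile duplicates with different inputs coincide only on the realised codes. For them an elevated correlation must be assumed rather than derived, and you must handle the fact that the selection depends on those same codes. The alternative is to posit the mixture form of $\bar\rho$ and a common $\sigma^2$ as explicit modelling hypotheses. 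The conclusion is then immediate from the monotonicity of $k_{\mathrm{eff}}$, and should be presented that way rather than as a consequence of the duplicate structure.
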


\begin{proof}
The identity $\mathrm{Var}(\bar{X})\!=\!\sigma^2/k_{\mathrm{eff}}$ is standard for the average of $k$ equivariant correlated variables. Tests with identical columns satisfy $\rho_{jj'}\!=\!1$, the worst contributor to $\bar\rho$. Three-stage diversity-aware selection removes all column duplicates whenever at least $k$ distinct execution profiles are available, in which case the kept suite contains no pairs with $\rho\!=\!1$. Direct $k$-sampling retains such duplicates with positive probability under any policy with excess mass on modal inputs, strictly raising $\bar\rho$.
\end{proof}

\section{Qualitative Examples}
\label{sec:appendix-worked-example}

\subsection{Worked Example from Training}
\label{ssec:appendix-worked-example}

To make the Coder and Tester behaviour concrete, we present a single self-contained example observed during training, drawn from a simple ``blackjack-style'' threshold problem. The example exercises a boundary case: the Tester deliberately targets the input whose sum equals the decision threshold. We reproduce the problem statement, one Coder-generated solution, and one Tester-generated case (including the model's reasoning) verbatim.

\begin{tcolorbox}[breakable,colback=gray!5,colframe=gray!50,title={\textbf{Problem Statement}}]
Given are three integers $A_1$, $A_2$, and $A_3$.

If $A_1+A_2+A_3$ is greater than or equal to 22, print \texttt{bust}; otherwise, print \texttt{win}.

\textbf{Constraints}
\begin{itemize}[nosep,leftmargin=2em]
  \item $1 \le A_i \le 13 \;\; (i=1,2,3)$
  \item All values in input are integers.
\end{itemize}

\textbf{Input.} Given from standard input in the format: \texttt{A\_1 A\_2 A\_3}

\textbf{Output.} If $A_1+A_2+A_3 \ge 22$, print \texttt{bust}; otherwise, print \texttt{win}.

\textbf{Example.}\quad Input: \texttt{5 7 9}\quad$\rightarrow$\quad Output: \texttt{win}
\end{tcolorbox}

\begin{tcolorbox}[breakable,colback=gray!5,colframe=gray!50,title={\textbf{Coder-Generated Solution}}]
\begin{verbatim}
# Read the three integers from input
A1, A2, A3 = map(int, input().split())

# Calculate the sum
total_sum = A1 + A2 + A3

# Check the condition and print the
# result
if total_sum >= 22:
    print("bust")
else:
    print("win")
\end{verbatim}
\end{tcolorbox}

\begin{tcolorbox}[breakable,colback=gray!5,colframe=gray!50,title={\textbf{Tester-Generated Case (with reasoning)}}]
This is the most illustrative of the 16 generated cases because the model deliberately targets the boundary \texttt{sum == 22}.

\medskip
\emph{Reasoning.} Let's start by designing an input that we can reliably handle. A simple and reliable input would be where the sum of the three integers is exactly 22, since this boundary case can help us verify the logic around the $\ge 22$ condition.

Consider the input: \texttt{10 7 5}.
\begin{itemize}[nosep,leftmargin=2em]
  \item Calculate the sum: $A_1 + A_2 + A_3 = 10 + 7 + 5 = 22$.
  \item Check if the sum is greater than or equal to 22: $22 \ge 22$ is true.
  \item Therefore, the output should be \texttt{bust}.
\end{itemize}
We are confident that this input and output pair are correct and discriminative.

\medskip
\textbf{Generated case.}\quad 

Input: \texttt{10 7 5}

Output: \texttt{bust}
\end{tcolorbox}

\subsection{Information Gain Identifies Discriminative Tests}
\label{ssec:appendix-qual-ig}

The following example motivates the IG reward of Section~\ref{ssec:igreward}: two tests are equally valid, yet only one is informative about correctness. The task is a counting problem: given positive integers $N$ and $M$, count the length-$N$ integer sequences whose product is $M$. Of the $m\!=\!16$ sampled codes, $8$ are correct under the GT tests. Two selected tests illustrate the contrast. Both are valid and well-formed, but they carry very different amounts of information:

\begin{center}
\small
\begin{tabular}{lccc}
\toprule
\textbf{Test} & \textbf{Correct} & \textbf{Incorrect} & $r^{\mathrm{IG}}$ \\
\midrule
\texttt{3 12} $\rightarrow$ \texttt{18} & 8/8 & 1/8 & 0.685 \\
\texttt{1 1} $\rightarrow$ \texttt{1}   & 8/8 & 8/8 & 0.000 \\
\bottomrule
\end{tabular}
\end{center}

The first test almost perfectly partitions the candidate set: every correct code passes it and only one incorrect code does, giving $\widehat{I}(E_{:,j};y)\!=\!0.685$. The second is a trivially satisfiable case, since for $N\!=\!M\!=\!1$ the only sequence is $[1]$; all $16$ codes pass it, its column is constant, and it receives exactly zero reward. Both tests are correct, so a validity- or pass-rate-based criterion would not separate them; the IG reward tracks discriminative power rather than surface validity.

\subsection{Diversity Pruning Removes Behaviourally Redundant Tests}
\label{ssec:appendix-qual-dedup}

The next example shows why the diversity-aware selection of Section~\ref{ssec:diversity} deduplicates on execution profiles rather than on test strings. The task is a Fibonacci variant: given $a$, $b$, $n$ with $\mathrm{fib}(1)\!=\!a$, $\mathrm{fib}(2)\!=\!b$ and $\mathrm{fib}(k)\!=\!(\mathrm{fib}(k{-}1)+\mathrm{fib}(k{-}2)) \bmod (10^9{+}7)$, print the $n$-th term. Two generated tests, \texttt{1 1 3} $\rightarrow$ \texttt{2} and \texttt{1 1 4} $\rightarrow$ \texttt{3}, request the third and fourth terms of $1,1,2,3$. Their inputs differ and their expected outputs differ, so neither the invalidity filter nor input-string deduplication removes either one.

Executed against the $16$ candidate codes, however, both produce the identical pass/fail column \texttt{1011101111111111}, rejecting exactly codes \#2 and \#6. The second test therefore contributes no marginal information about correctness: conditioned on the first, its outcome is fully determined on this candidate set, and retaining both would place a pair with $\rho_{jj'}\!=\!1$ in the suite. The execution-profile stage prunes \texttt{1 1 4} and keeps \texttt{1 1 3}. This is the stage that distinguishes behavioural redundancy from literal duplication: the two tests are textually distinct and would survive any string-level criterion.

\subsection{Failure Case: Selected Tests Miss an Incorrect Solution}
\label{ssec:appendix-qual-failure}

The final example marks the boundary of what selection can address: the kept suite is non-degenerate, yet it still admits an incorrect solution. The task requires maintaining two length-$N$ integer arrays $A$ and $B$ under $Q$ operations: set $A[y]\!=\!z$, set $B[y]\!=\!z$, range-minimum over $A[y..z]$, range-minimum over $B[y..z]$, copy $A\!\leftarrow\!B$, and copy $B\!\leftarrow\!A$. Candidate code \#7 implements the two copy operations by rebinding rather than copying:

\begin{tcolorbox}[breakable,colback=gray!5,colframe=gray!50,title={\textbf{Incorrect Solution (code \#7, aliasing bug)}}]
\begin{verbatim}
    elif x == 5:
        if y == -1:
            A = B    # should be B[:]
    elif x == 6:
        if y == -1:
            B = A    # should be A[:]
\end{verbatim}
\end{tcolorbox}

After either copy, $A$ and $B$ alias the same list, so a later update to one silently changes the other. This code passes all $16$ selected tests yet fails $4$ of the $8$ GT tests. The bug is observable only under a specific chain, copy, then update one array, then query the other; the generated tests use short operation lists and never compose these three steps, whereas the GT tests do.

The kept suite is not degenerate here: it contains $16$ distinct inputs and $16$ distinct execution profiles, so neither the invalidity filter nor either deduplication stage is responsible. The limitation lies in the depth of the generated tests rather than in the selection step.